\documentclass{article}
\usepackage[utf8]{inputenc}
\usepackage[T1]{fontenc}
\usepackage{times}
\usepackage{helvet}
\usepackage{courier}

\usepackage{amsmath}
\usepackage{amssymb}
\usepackage{amsfonts}

\usepackage{booktabs}
\usepackage{multirow}
\usepackage{makecell}
\usepackage{array}
\usepackage{tabularx}

\usepackage{graphicx}
\usepackage{xcolor}
\usepackage{caption}

\usepackage[linesnumbered,ruled,vlined]{algorithm2e}

\usepackage{enumitem}
\usepackage{tcolorbox}
\usepackage{float}
\usepackage{nicefrac}
\usepackage{microtype}

\usepackage{url}
\usepackage[colorlinks=true, linkcolor=blue, citecolor=blue, urlcolor=blue]{hyperref}

\usepackage{tikz}

\usepackage[numbers,square]{natbib}

\usepackage{arxiv}

\usepackage{amsthm}

\newtheorem{theorem}{Theorem}[section]
\newtheorem{lemma}[theorem]{Lemma}

\title{Missing-by-Design: Certifiable Modality Deletion for Revocable Multimodal Sentiment Analysis}

\author{
    Rong Fu \\
    Independent Researcher \\
    Corresponding author \and
    Ziming Wang \\
    Independent Researcher \and
    Chunlei Meng \\
    Independent Researcher \and
    Jiekai Wu \\
    Independent Researcher \and
    Kangan Qian \\
    Independent Researcher \and
    Hao Zhang \\
    Independent Researcher \and
    Simon Fong \\
    Independent Researcher
}

\renewcommand{\headeright}{}
\renewcommand{\undertitle}{}
\renewcommand{\shorttitle}{Missing-by-Design}

\hypersetup{
    pdftitle={Missing-by-Design: Certifiable Modality Deletion for Revocable Multimodal Sentiment Analysis},
    pdfsubject={cs.CL, cs.LG},
    pdfauthor={Rong Fu, Ziming Wang, Chunlei Meng, Jiekai Wu, Kangan Qian, Hao Zhang, Simon Fong},
    pdfkeywords={Multimodal sentiment analysis, missing modality, certifiable deletion, privacy preserving, property embedding, modality reconstruction},
    pdfstartview={FitH},
    colorlinks=true,
    linkcolor=red,
    citecolor=green,
    filecolor=magenta,
    urlcolor=cyan
}

\begin{document}
\maketitle

\begin{abstract}
As multimodal systems increasingly process sensitive personal data, the ability to selectively revoke specific data modalities has become a critical requirement for privacy compliance and user autonomy. We present Missing-by-Design (MBD), a unified framework for revocable multimodal sentiment analysis that combines structured representation learning with a certifiable parameter-modification pipeline. Revocability is critical in privacy-sensitive applications where users or regulators may request removal of modality-specific information. MBD learns property-aware embeddings and employs generator-based reconstruction to recover missing channels while preserving task-relevant signals. For deletion requests, the framework applies saliency-driven candidate selection and a calibrated Gaussian update to produce a machine-verifiable Modality Deletion Certificate. Experiments on benchmark datasets show that MBD achieves strong predictive performance under incomplete inputs and delivers a practical privacy–utility trade-off, positioning surgical unlearning as an efficient alternative to full retraining.
\end{abstract}

\keywords{Multimodal sentiment analysis, missing modality, certifiable deletion, privacy preserving, property embedding, modality reconstruction}

\section{Introduction}
Multimodal sentiment analysis aims to integrate complementary cues from text, audio and visual streams to infer human affect and sentiment in real world scenarios. Prior studies demonstrate that multimodal representations improve predictive accuracy and robustness when modalities are jointly available. However, practical systems often face partial observability: modalities can be missing or corrupted due to privacy choices, sensor faults, automatic speech recognition errors, or collection constraints. Models trained on fully observed multimodal data frequently lose performance and reliability when confronted with such incompleteness. Empirical and survey works highlight both the prevalence of missing modalities and the limitations of existing approaches for handling them \cite{zhan2025systematic,pham2019found,aguilar2019multimodal}.

A range of strategies has been proposed to increase resilience under missing modalities. Some methods focus on strengthening the available modality embeddings via robust representation learning or self-distillation. Other approaches explicitly reconstruct absent channels using learned priors or generative models. Structural techniques such as graph based completion exploit turn and speaker dependencies in conversational data, and modality reweighting schemes attempt to rebalance contributions from underrepresented channels \cite{li2023enhancing,wang2023incomplete,lian2023gcnet,xu2023grmi,nguyen2024ada2i}. These lines of work show complementary strengths but also reveal two persistent gaps. First, many methods rely exclusively on the observed modalities at prediction time and thus fail to leverage modality-level distributional priors that capture sample-invariant characteristics. Second, while several generative or imputation schemes use a single learned distribution for reconstruction, they often overlook intrinsic differences between modalities that should inform modality-specific reconstructions.

In parallel, the need for operational privacy controls has driven research on machine unlearning and targeted deletion. Applications in sensitive domains, notably healthcare, motivate solutions that can remove modality-specific information from a trained model without full retraining \cite{liu2024patient,rahman2024survey,zhang2021privacy,fabiano2025affective}. Existing unlearning methods emphasize weight scrubbing, calibrated noise injection, or Newton-style updates, but extending these techniques to heterogeneous multimodal backbones and providing verifiable deletion guarantees remains an open challenge.

To address the dual objectives of robust fusion under missing inputs and verifiable modality-level revocation, we propose Missing-by-Design (MBD). MBD combines property-aware decomposition and modality-specific generators with a numerically stable candidate selection and calibrated surgery operator that produces a Modality Deletion Certificate. The property embeddings capture modality-level priors that guide reconstruction and stabilize fusion. The surgery operator uses saliency and a SwiftPrune inspired importance proxy to identify parameter subsets for safe modification, followed by Gaussian calibration that controls indistinguishability relative to a model never exposed to the target modality.

Our contributions are as follows. First, we introduce MBD, a practical pipeline that unifies property-aware representation decomposition, contrastive back-translation objectives, and a certifiable parameter surgery step for modality-level deletion. Second, we design a property embedding mechanism that separates sample-invariant modality characteristics from sample-specific signals, and we pair it with dedicated generator and back-translation networks to produce high-fidelity reconstructions for absent channels. Third, we propose a numerically stable importance proxy and a sensitivity-aware candidate selection strategy that, together with Gaussian calibration, yield a machine-verifiable Modality Deletion Certificate while preserving downstream utility. Finally, we present empirical evaluations on standard multimodal benchmarks that demonstrate improved robustness under diverse missing-modality regimes and show how MBD provides a tunable privacy-utility envelope for modality revocation.

\begin{figure*}[t]
  \centering
  \includegraphics[width=0.98\textwidth]{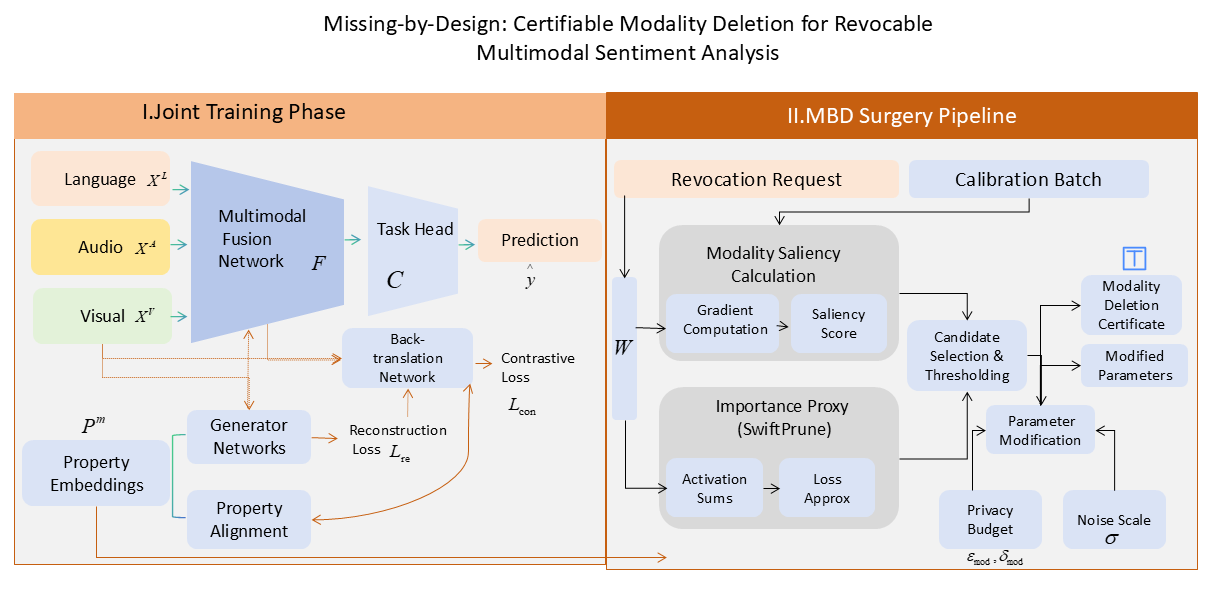}
  \caption{Overview of the Missing-by-Design (MBD) framework for certifiable modality deletion. The architecture is organized into two primary stages: a property-informed joint training phase and a weight surgery pipeline. In the training phase, multimodal inputs ($X^L, X^A, X^V$) are integrated via a fusion network $\mathcal{F}$ for sentiment prediction, while auxiliary generator networks $\mathcal{G}_m$ and property embeddings $P^m$ are optimized to enforce modality-specific reconstruction and property alignment. Upon a revocation request for modality $m^\star$, the surgery pipeline utilizes a calibration batch $\mathcal{B}$ to compute the modality saliency $s_q^{(m^\star)}$ and a SwiftPrune-inspired importance proxy $L_q$. These metrics guide the candidate selection and thresholding process ($\eta_s, \eta_L$), followed by a differential-privacy calibrated Gaussian mechanism ($\varepsilon_{\mathrm{mod}}, \delta_{\mathrm{mod}}$) for parameter modification. The pipeline ultimately outputs the modified model parameters $\mathcal{W}'$ alongside a machine-verifiable Modality Deletion Certificate (MDC).}
  \label{fig:mbd_architecture}
\end{figure*}

\section{Related Work}

\subsection{Multimodal sentiment and emotion modelling}
Multimodal sentiment analysis and emotion recognition leverage complementary cues from text, audio and visual streams to improve predictive performance. Unified frameworks that jointly model sentiment and emotion demonstrate benefits from shared latent spaces and task-aware supervision, as shown by recent unified formulations that align label and feature representations \cite{hu2022unimse}. Transformer-based fusion architectures and emotion-level representation schemes further refine cross-modal interactions, improving multi-label emotion recognition by integrating fine-grained token and frame alignments \cite{le2023multi,khan2025memocmt}. Cross-modal alignment and attention enhancements have been applied to temporal video tasks to better capture emotion dynamics \cite{xiao2023cross,fan2023pmr}. These contributions illustrate that careful fusion and representation design are central to robust multimodal affective modelling \cite{hu2022unimse,gao2024enhanced}. Empirical evidence on short e-commerce reviews suggests that transformer-based embeddings can be outperformed by classical methods such as Word2Vec in clustering tasks, particularly when insufficient text length limits effective contextual modeling \cite{lai2026transformers}.

\subsection{Handling missing and noisy modalities}
Practical systems must tolerate absent or corrupted modalities encountered in real-world data streams. Graph-based completion techniques reconstruct missing conversational modalities by exploiting structural dependencies among modalities and turns \cite{lian2023gcnet}. Two-stage schemes first denoise modality-specific signals and then complement missing channels using learned priors; such denoise-then-complement strategies have proven effective on noisy benchmarks \cite{zhuang2025tmdc,wei2025msaf}. Meta-learning approaches enable a single model to generalize across varying missing-rate regimes by quickly adapting to new incompleteness patterns \cite{tu2025meta}. Proxy-driven mechanisms and latent-Gaussian modelling capture uncertainty in absent channels and support robust downstream fusion \cite{zhu2025proxy,wang2023incomplete}. Collectively, these methods motivate architectures that either synthesize absent information or emphasize invariant representations that resist corruption \cite{xiang2024multimodal}.

\subsection{Representation learning and contrastive strategies}
Recent work emphasizes representation-level advances to make fusion resilient and discriminative. Contrastive objectives applied at global and local scales encourage modality-invariant factors while preserving salient, task-relevant signals \cite{mai2023learning,yang2023confede}. Decomposition-based pipelines separate modality-common and modality-specific components to tighten alignment and reduce redundancy during fusion \cite{zeng2024disentanglement,liu2024contrastive}. Prototypical rebalancing promotes class-centric clustering, which helps underrepresented modalities contribute more effectively to the joint embedding \cite{fan2023pmr}. Relaxed reconstruction penalties and slack reconstruction terms have been proposed to avoid over-constraining embeddings and to better capture inter-sample variability \cite{zhu2025multimodal}. These representation advances frequently pair with attentive fusion modules to maximize the utility of partial or noisy inputs \cite{yang2023confede,mai2023learning}.

\subsection{Privacy, unlearning and certified deletion}
Mechanisms for removing information from trained models are increasingly important for privacy and compliance. Parameter surgery with calibrated noise, Newton-style or Hessian-free updates, and probabilistic sensitivity bounds have been proposed to approximate retraining while reducing computational cost \cite{qiao2024hessian,li2024single,cheng2024multidelete}. Theoretical analyses establish deletion capacity and generalization-rate guarantees for certified unlearning methods, often by deriving sensitivity-based Gaussian calibration rules \cite{liu2023certified,pandey2025gaussian}. Multimodal extensions of unlearning address the extra complexity of cross-modal alignment and propose modality-aware pruning or neuron-level adjustments tailored to heterogeneous feature backbones \cite{liu2025modality,li2024single}. Practical evaluations and benchmarks measure how well deletion procedures remove modality-specific information without unduly harming utility \cite{liu2025protecting,cheng2024multidelete}.

\subsection{Security, attacks and privacy-preserving multimodal systems}
Adversarial and privacy attacks reveal vulnerabilities in large multimodal models and motivate defensive strategies. Membership inference and black-box attack studies demonstrate that multimodal architectures can leak training or modality-specific information, prompting work on privacy-preserving training and evaluation suites \cite{ko2023practical,wang2025black}. Research on benign forgetting and cross-modal safety alignment investigates whether targeted textual unlearning can mitigate cross-modality leakage and alignment failures \cite{chakraborty2024can,zeng2025towards}. Complementary contributions focus on user-controlled privacy primitives, emphasizing traceable and controllable data handling strategies \cite{hublet2024user,revathy2025cross}.

\subsection{Positioning of the proposed approach}
The Missing-by-Design framework synthesizes ideas from the preceding strands of research. It couples property-aware representation decomposition and contrastive-style regularization to produce robust embeddings under partial observability, building on decomposition and contrastive literature \cite{zeng2024disentanglement,mai2023learning}. For deletion it uses numerically stable importance proxies and sensitivity-aware surgery followed by calibrated Gaussian perturbation, aligning with certified-unlearning theory and Hessian-free update techniques \cite{liu2023certified,qiao2024hessian,pandey2025gaussian}. By integrating denoising, proxy-driven completion and certifiable parameter modification into a single pipeline, the method aims to provide an operational mechanism for modality-level removal while preserving downstream utility \cite{zhuang2025tmdc,cheng2024multidelete,xu2025privacy}.

\subsection{Summary}
Prior work supplies a rich collection of tools for robust fusion, missing-data compensation and principled deletion. The proposed method advances this body of work by unifying property-aware decomposition, stability-focused contribution estimates, and privacy-calibrated surgery into a deployable modality-deletion workflow. Subsequent sections empirically evaluate how these choices balance privacy and utility across established benchmarks.

\section{Methodology}

The proposed approach, \emph{Missing-by-Design} (MBD), provides a certifiable pipeline for revocable multimodal sentiment analysis. MBD converts a user's request to hide a modality into a concrete parameter-modification procedure that is calibrated by a convex differential-privacy mechanism and returns a machine-verifiable Modality Deletion Certificate (MDC). MBD operationalizes modality-level forgetting by combining a property-embedding informed backbone, a numerically stable SwiftPrune-inspired importance proxy, gradient-based modality saliency, and Gaussian-mechanism calibration into a single pipeline that emits a verifiable Modality Deletion Certificate.

\subsection{Notation and model backbone}
\begin{align}
X_i &= \{X_i^{m}\in\mathbb{R}^{d_m}\mid m\in\{L,A,V\}\},
\end{align}
where $X_i^{m}$ is the feature vector of modality $m$ for utterance $i$ and $d_m$ denotes the feature dimension for modality $m$. In our configuration the frozen feature extractors produce dimensions $d_L=768$, $d_A=74$, and $d_V=512$, and missing modalities are zero-padded to the corresponding dimension.

\begin{equation}
\hat{X}_i^{m} = \mathcal{G}_m\big(X_i^{\setminus m}, P^{m};\theta_{\mathcal{G}_m}\big),
\end{equation}
where $\mathcal{G}_m$ denotes the generator network for modality $m$, $X_i^{\setminus m}$ denotes the set of available modalities for sample $i$, $P^{m}\in\mathbb{R}^{1\times d_p}$ is the learnable property embedding for modality $m$, $d_p$ is the property-embedding dimension and $\theta_{\mathcal{G}_m}$ are generator parameters. We set $d_p=128$ and update $P^{m}$ jointly with other parameters.

\begin{equation}
Z_i = \mathcal{F}\big(X_i^{L},X_i^{A},X_i^{V};\theta_{\mathcal{F}}\big),
\end{equation}
where $\mathcal{F}$ is the fusion network parameterized by $\theta_{\mathcal{F}}$ that produces a joint embedding $Z_i$.

\begin{equation}
\hat{y}_i = \mathcal{C}(Z_i;\theta_c),
\end{equation}
where $\mathcal{C}$ is the task head (classifier or regressor) with parameters $\theta_c$ and $\hat{y}_i$ denotes the model output for sample $i$.

\subsection{Training objectives}
The learning objective is a weighted combination of supervised loss, reconstruction loss, property-alignment loss and contrastive regularization. The reconstruction loss for modality $m$ is
\begin{equation}
\mathcal{L}_{\mathrm{re}} = \frac{1}{N}\sum_{i=1}^N \big\|\hat{X}_i^{m} - X_i^{m}\big\|_2^2,
\end{equation}
where $N$ is the batch size, $\hat{X}_i^{m}$ is the generated embedding and $X_i^{m}$ is the true embedding when available.

\begin{equation}
\mathcal{L}_{\mathrm{task}} = \frac{1}{N}\sum_{i=1}^N \ell\big(\hat{y}_i,y_i\big),
\end{equation}
where $\ell(\cdot,\cdot)$ denotes mean squared error for regression or cross-entropy for classification and $y_i$ is the ground-truth label.

To encourage the fused embedding to retain modality-specific signals we employ a back-translation network and a Noise-Contrastive Estimation objective:
\begin{align}
\tilde{X}_i^{m} &= \mathcal{B}_m\big(Z_i;\theta_{b_m}\big),\\
\mathcal{L}_{\mathrm{con}} &= -\frac{1}{N}\sum_{i=1}^N \log\frac{\exp\big(\langle\tilde{X}_i^{m},\Sigma_i^{m}\rangle/\tau\big)}{\sum_{j=1}^N\exp\big(\langle\tilde{X}_i^{m},\Sigma_j^{m}\rangle/\tau\big)},
\end{align}
where $\mathcal{B}_m$ is the back-translation network for modality $m$, $\Sigma_i^{m}$ denotes the sample-specific component for modality $m$, $\tau>0$ is a temperature scalar and $\langle\cdot,\cdot\rangle$ denotes inner product.

\subsection{Property embedding via decomposition and alignment}
Each modality embedding is decomposed into a sample-specific component and a sample-invariant component using a learned decomposition operator:
\begin{equation}
\Sigma_i^{m},\ \mu_i^{m} = \mathrm{DE}_m\big(X_i^{m};\theta_{\mathrm{DE}_m}\big),
\end{equation}
where $\Sigma_i^{m}$ denotes the sample-specific part and $\mu_i^{m}$ denotes the sample-invariant part for sample $i$ and modality $m$. The batch mean of invariant components is
\begin{equation}
\bar{\mu}^{m} = \frac{1}{N}\sum_{i=1}^N \mu_i^{m},
\end{equation}
where $\bar{\mu}^{m}$ is used as a proxy for modality-level property.

We enforce orthogonality and intra-batch invariance by minimizing
\begin{align}
\mathcal{L}_{\mathrm{or}} &= \frac{1}{N}\sum_{i=1}^N \langle \Sigma_i^{m}, \mu_i^{m}\rangle,\\
\mathcal{L}_{\mathrm{inv}} &= \frac{1}{N}\sum_{i=1}^N \big\|\mu_i^{m}-\bar{\mu}^{m}\big\|_2^2,
\end{align}
where $\mathcal{L}_{\mathrm{or}}$ and $\mathcal{L}_{\mathrm{inv}}$ denote orthogonality and invariance penalties respectively. The property-alignment penalty is
\begin{equation}
\mathcal{L}_{\mathrm{app}} = \mathrm{ReLU}\big(\|P^{m}-\bar{\mu}^{m}\|_2^2 - \varepsilon\big),
\end{equation}
where $\varepsilon\ge 0$ is a margin controlling tolerated deviation. Let $\hat{\mathcal{L}}_{\mathrm{re}}$ denote the decomposition-based reconstruction loss; the property-embedding loss is
\begin{equation}
\mathcal{L}_{\mathrm{pe}} = \mathcal{L}_{\mathrm{or}} + \mathcal{L}_{\mathrm{inv}} + \hat{\mathcal{L}}_{\mathrm{re}} + \mathcal{L}_{\mathrm{app}}.
\end{equation}
All learnable components including $P^{m}$ are optimized jointly using SGD with a typical learning rate for $P^{m}$ set to $1\mathrm{e}{-3}$.

\subsection{Overall training objective}
The complete training objective combines the above terms:
\begin{equation}
\mathcal{L} = \mathcal{L}_{\mathrm{task}} + \alpha\,\mathcal{L}_{\mathrm{re}} + \beta\,\mathcal{L}_{\mathrm{pe}} + \gamma\,\mathcal{L}_{\mathrm{con}},
\end{equation}
where $\alpha,\beta,\gamma\ge 0$ are scalar coefficients that weight auxiliary terms.

\subsection{Controlled unlearning: operational indistinguishability}
A modality-deletion request for modality $m^\star$ is executed via a surgery operator $\mathcal{S}_{m^\star}$ that outputs modified parameters $\mathcal{W}'=\mathcal{S}_{m^\star}(\mathcal{W})$, where $\mathcal{W}$ denotes the pre-surgery parameter set. We express modality-level indistinguishability in a DP-like inequality:
\begin{equation}
\Pr\big[\mathcal{A}(\mathcal{S}_{m^\star}(\mathcal{W}))\in\mathcal{R}\big]\le e^{\varepsilon_{\mathrm{mod}}}\Pr\big[\mathcal{A}(\mathcal{W}^{-m^\star})\in\mathcal{R}\big]+\delta_{\mathrm{mod}},
\end{equation}
where $\mathcal{A}$ represents any adversary that consumes released parameters and returns a test statistic, $\mathcal{R}$ is any measurable output set, $\mathcal{W}^{-m^\star}$ denotes a hypothetical model never exposed to modality $m^\star$, and $(\varepsilon_{\mathrm{mod}},\delta_{\mathrm{mod}})$ quantify the indistinguishability guarantee. To make this statement actionable we adopt the following working assumptions: the training loss is $L$-Lipschitz in model parameters, the surgery candidate set size is bounded by $r|\mathcal{W}|$ with $r\le 0.05$ in our experiments, and the calibration batch used to derive surgery statistics is disjoint from training and test partitions. Under these conditions the surgery operator's $\ell_2$-sensitivity can be bounded and used to calibrate a Gaussian mechanism.

\subsection{Gaussian mechanism instantiation}
Denote by $\Delta$ the $\ell_2$-sensitivity of the surgery operator under the chosen budget. The Gaussian noise scale is set to
\begin{equation}
\sigma = \frac{\Delta\sqrt{2\ln(1.25/\delta_{\mathrm{mod}})}}{\varepsilon_{\mathrm{mod}}},
\end{equation}
where $\sigma$ is the standard deviation of additive Gaussian noise and $(\varepsilon_{\mathrm{mod}},\delta_{\mathrm{mod}})$ are the target privacy parameters. This mapping follows the standard Gaussian mechanism.

\subsection{Weight surgery: importance proxy and modality saliency}
Full Hessian computation is infeasible at scale; therefore we adopt a contribution-oriented proxy inspired by SwiftPrune with numerical safeguards. For a parameter $w_q$ associated with a local activation sequence $\{x_i\}$ define
\begin{equation}
L_q \approx \frac{1}{2}\frac{w_q^2}{1-\chi_q},\qquad \chi_q=\frac{x_q^2}{S},
\end{equation}
where $S=\sum_i x_i^2$ is the per-row squared-activation sum and $L_q$ approximates the expected loss increase upon removal of $w_q$. To avoid numerical instability we clip $\chi_q\le\chi_{\max}$ with $\chi_{\max}=0.99$.

Modality-specific saliency is computed by aggregating absolute reconstruction gradients over a calibration batch $\mathcal{B}$:
\begin{equation}
s_q^{(m^\star)} = \frac{1}{|\mathcal{B}|}\sum_{i\in\mathcal{B}} \big|\nabla_{w_q}\mathcal{L}_{\mathrm{re}}^{(m^\star)}(i)\big|,
\end{equation}
where high $s_q^{(m^\star)}$ indicates that parameter $w_q$ contributes to reconstructing modality $m^\star$. Intentionally, selecting high-saliency parameters for modification increases the reconstruction loss of the deleted modality and thus enforces forgetting.

Surgery candidates are chosen by thresholding both saliency and contribution proxies:
\begin{equation}
\mathcal{I}_{m^\star} = \{q \mid s_q^{(m^\star)}\ge\eta_s \ \text{and}\ L_q\le\eta_L\},
\end{equation}
where $\eta_s$ and $\eta_L$ are presets controlling modality relevance and minimum contribution respectively. From $\mathcal{I}_{m^\star}$ we sort by ascending $L_q$ and select the top-$k$ indices with $k=\lfloor r\cdot|\mathcal{W}|\rfloor$.

\begin{algorithm}[htbp]
\caption{Modality-Targeted Weight Surgery (Practical variant)}
\label{alg:weight-surgery}
\SetKwInOut{Input}{Input}\SetKwInOut{Output}{Output}
\Input{Parameters $\mathcal{W}$, target modality $m^\star$, calibration batch $\mathcal{B}$ (default $|\mathcal{B}|=5{,}000$), surgery budget $r\in(0,1)$, privacy target $(\varepsilon_{\mathrm{mod}},\delta_{\mathrm{mod}})$, thresholds $\eta_s,\eta_L$.}
\Output{Modified parameters $\mathcal{W}'$ and Modality Deletion Certificate (MDC)}
Compute per-parameter row-wise activation sums $S$ on $\mathcal{B}$\;
For each parameter $w_q$ compute $\chi_q\leftarrow\min\{x_q^2/S,\chi_{\max}\}$ and set $L_q\leftarrow \tfrac{1}{2}w_q^2/(1-\chi_q)$\;
For each parameter compute saliency $s_q^{(m^\star)}\leftarrow \tfrac{1}{|\mathcal{B}|}\sum_{i\in\mathcal{B}}|\partial\mathcal{L}_{\mathrm{re}}^{(m^\star)}(i)/\partial w_q|$\;
Form candidate set $\mathcal{C}\leftarrow\{q: s_q^{(m^\star)}\ge\eta_s \ \&\ L_q\le\eta_L\}$\;
Sort $\mathcal{C}$ by ascending $L_q$ and select $k=\lfloor r\cdot|\mathcal{W}|\rfloor$ indices to obtain $\mathcal{C}_{\mathrm{sel}}$\;
Compute sensitivity bound $\Delta$ for selected indices and set $\sigma=\Delta\sqrt{2\ln(1.25/\delta_{\mathrm{mod}})}/\varepsilon_{\mathrm{mod}}$\;
If $\varepsilon_{\mathrm{mod}}\le 1$ then for each $q\in\mathcal{C}_{\mathrm{sel}}$ set $w_q\leftarrow 0$ else set $w_q\leftarrow w_q + \mathcal{N}(0,\sigma^2)$\;
Assemble MDC containing: deleted modality $m^\star$, JSON list of indices $\mathcal{C}_{\mathrm{sel}}$, noise seed and $\sigma$, reported $(\varepsilon_{\mathrm{mod}},\delta_{\mathrm{mod}})$, SHA-256 digest of released parameters, and a short test-suite specification\;
Return $\mathcal{W}'$ and MDC\;
\end{algorithm}

The mapping from privacy budget to modification type provides an explicit trade-off: conservative budgets favor randomized noise injection to improve indistinguishability, whereas moderate budgets prioritize deterministic zeroing to preserve utility. Default operating points used in our experiments are $\eta_s=0.1$, $\eta_L=0.05$, $r=0.03$, $\chi_{\max}=0.99$, and calibration batch size $|\mathcal{B}|=5{,}000$; empirical stability of proxies is observed for $|\mathcal{B}|\ge 2{,}000$.

\subsection{Modality Deletion Certificate (MDC)}
The MDC is a machine-readable artifact that lists the deleted modality $m^\star$, the modified parameter indices $\mathcal{C}_{\mathrm{sel}}$, the random seed and noise scale $\sigma$ (or a commitment if required), the computed privacy budget $(\varepsilon_{\mathrm{mod}},\delta_{\mathrm{mod}})$ under chosen composition rules, a SHA-256 digest of the released parameter vector, and a compact specification of the diagnostic tests used for empirical validation. The MDC is intended to enable independent verification of the reported deletion metadata given the released parameters.

\subsection{Implementation and numerical safeguards}
All row-wise computations are performed in parallel. Gradients for saliency scores are computed efficiently with vector-Jacobian products on the calibration batch. The clipping $\chi_q\le\chi_{\max}$ prevents division by small denominators. The sensitivity $\Delta$ is upper-bounded under conservative parameter-norm assumptions and is then used to calibrate Gaussian noise; composition of multiple surgeries is accounted for with zCDP-to-$(\varepsilon,\delta)$ conversions when reporting cumulative budgets.

\subsection{Remarks on modality-saliency and deletion objective}
Selecting parameters with high reconstruction gradient intentionally targets parameters that encode modality-specific signals; modifying these parameters increases reconstruction loss for the deleted modality and thereby achieves the forgetting objective while the joint threshold on $L_q$ limits harm to overall predictive performance.

\subsection{Default hyper-parameters and validation protocol}
Default choices are: calibration batch size $|\mathcal{B}|=5{,}000$ drawn disjointly from training and test partitions, surgery budget $r=0.03$, thresholds $\eta_s=0.1$, $\eta_L=0.05$, property-embedding dimension $d_p=128$, and clipping $\chi_{\max}=0.99$. Sensitivity curves for $r$, $\eta_s$ and $\eta_L$ are evaluated on the calibration partition to select operating points; a stable regime is typically observed for $|\mathcal{B}|\ge 2{,}000$.

\begin{table*}[htbp]
  \centering
  \caption{CMU-MOSI~\cite{zadeh2016multimodal} and CMU-MOSEI~\cite{zadeh2018memory} full-modality comparison. Metrics: Acc7 (\%), Acc2 (\%), F1 (\%), MAE (lower better), Corr. Best entries are \textbf{bold}. Mean $\pm$ std over 3 runs; all MBD improvements significant at $p<0.01$ vs runner-up (two-tailed paired t-test).}
  \label{tab:mosi_mosei}
  \resizebox{0.88\textwidth}{!}{%
    \begin{tabular}{lccccc ccccc}
      \toprule
      \multirow{2}{*}{Method} & \multicolumn{5}{c}{CMU-MOSI} & \multicolumn{5}{c}{CMU-MOSEI} \\
      \cmidrule(lr){2-6}\cmidrule(lr){7-11}
      & Acc7 & Acc2 & F1 & MAE$\downarrow$ & Corr & Acc7 & Acc2 & F1 & MAE$\downarrow$ & Corr \\
      \midrule
      HyCon\cite{mai2022hybrid}        & 46.6 & 85.2 & 85.1 & 0.741 & 0.779 & 52.8 & 85.4 & 85.6 & 0.554 & 0.751 \\
      UniMSE\cite{hu2022unimse}       & 48.7 & 86.9 & 86.4 & 0.691 & 0.809 & 54.4 & 87.5 & 87.5 & 0.523 & 0.773 \\
      ConFEDE\cite{yang2023confede}      & 42.3 & 85.5 & 85.5 & 0.742 & 0.782 & 54.9 & 85.8 & 85.8 & 0.522 & 0.780 \\
      MGCL\cite{mai2023learning}         & 49.3 & 86.7 & 86.7 & 0.685 & 0.707 & 53.9 & 86.4 & 86.4 & 0.535 & 0.772 \\
      HyDiscGAN\cite{wu2024hydiscgan}    & 43.2 & 86.7 & 86.3 & 0.749 & 0.782 & 54.4 & 86.3 & 86.2 & 0.533 & 0.761 \\
      CLGSI\cite{yang2024clgsi}        & 48.0 & 86.4 & 86.3 & 0.703 & 0.790 & 54.6 & 86.3 & 86.2 & 0.532 & 0.763 \\
      DLF\cite{wang2025dlf}          & 47.1 & 85.1 & 85.0 & 0.731 & 0.781 & 53.9 & 85.4 & 85.3 & 0.536 & 0.764 \\
      PAMoE-MSA\cite{huang2025pamoe}    & 48.7 & 87.0 & 87.0 & 0.690 & 0.806 & 54.6 & 87.7 & 86.9 & 0.526 & 0.780 \\
      MSAmba\cite{he2025msamba}       & 49.7 & 87.4 & 87.4 & 0.707 & 0.809 & 54.2 & 86.9 & 86.9 & 0.507 & 0.796 \\
      \midrule
      \textbf{MBD (ours)} & \textbf{50.8} & \textbf{89.9} & \textbf{89.9} & \textbf{0.620} & \textbf{0.872} & \textbf{56.7} & \textbf{89.4} & \textbf{89.4} & \textbf{0.478} & \textbf{0.836} \\
      \bottomrule
    \end{tabular}%
  }
\end{table*}

\section{Experiments}

\subsection{Datasets and evaluation metrics}
We assess MBD on three standard multimodal sentiment benchmarks: CMU-MOSI\cite{zadeh2016multimodal}, CMU-MOSEI\cite{zadeh2018memory} and IEMOCAP\cite{busso2008iemocap}. Each utterance is represented by pre-extracted modality features (text, audio, visual); missing modalities are zero-padded so that input dimensionality remains constant. Depending on dataset and task we report weighted accuracy (WA), unweighted accuracy (UA), 7-way accuracy (Acc7), binary accuracy (Acc2), F1, mean absolute error (MAE) and Pearson correlation (Corr). All experiments follow canonical train / validation / test splits. Reported scores are averages over three independent random seeds.
\subsection{visualization}

\begin{figure}[htbp]
  \centering
  \includegraphics[width=0.8\textwidth]{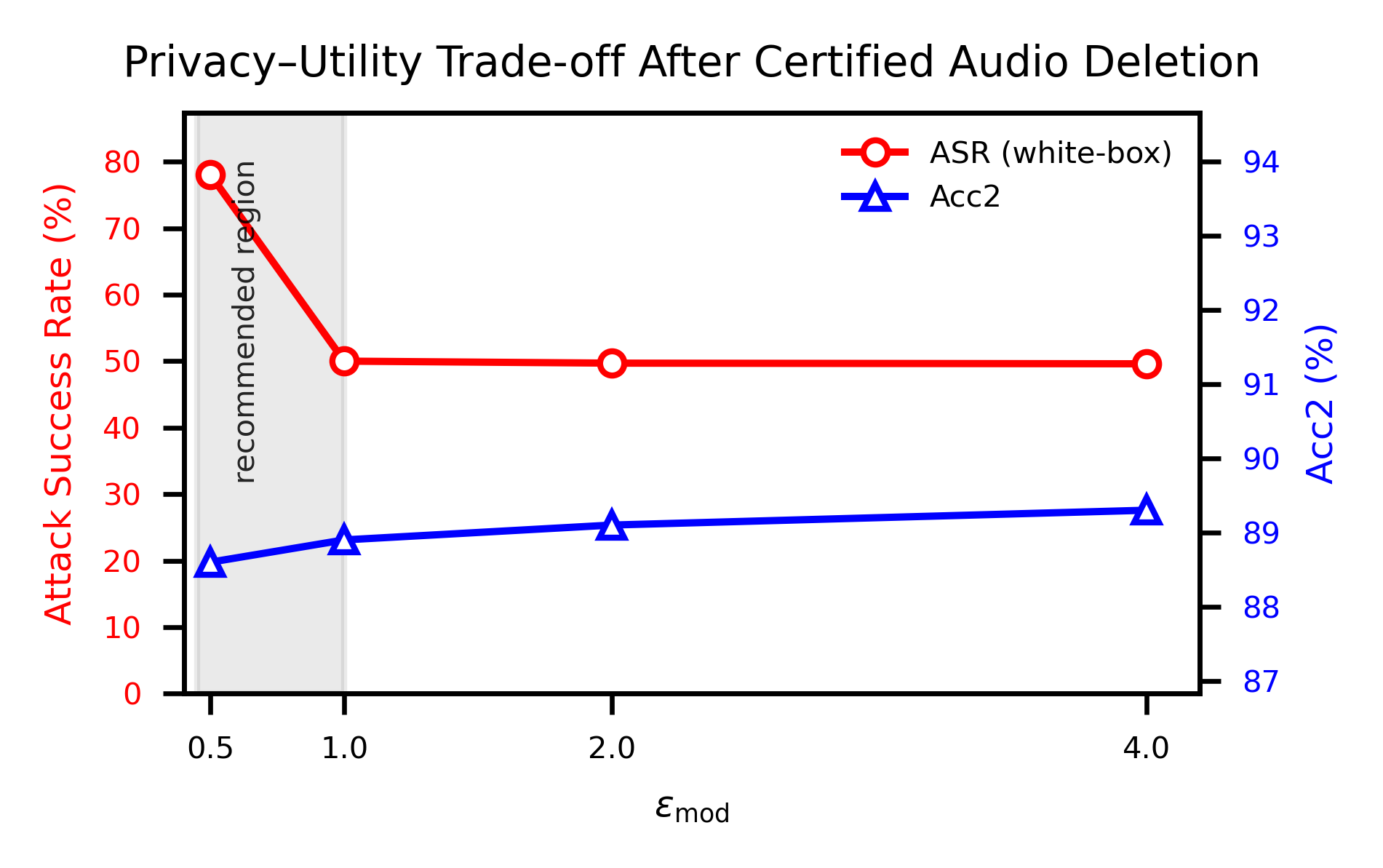}
  \caption{Privacy–utility trade-off after certified audio deletion. Plotted curves show binary accuracy (Acc2) together with attack success rate (ASR, white-box) as functions of \(\varepsilon_{\mathrm{mod}}\). Lower ASR and higher Acc2 are preferred.}
  \label{fig:trade_off}
\end{figure}

\begin{figure}[htbp]
  \centering
  \includegraphics[width=0.8\textwidth]{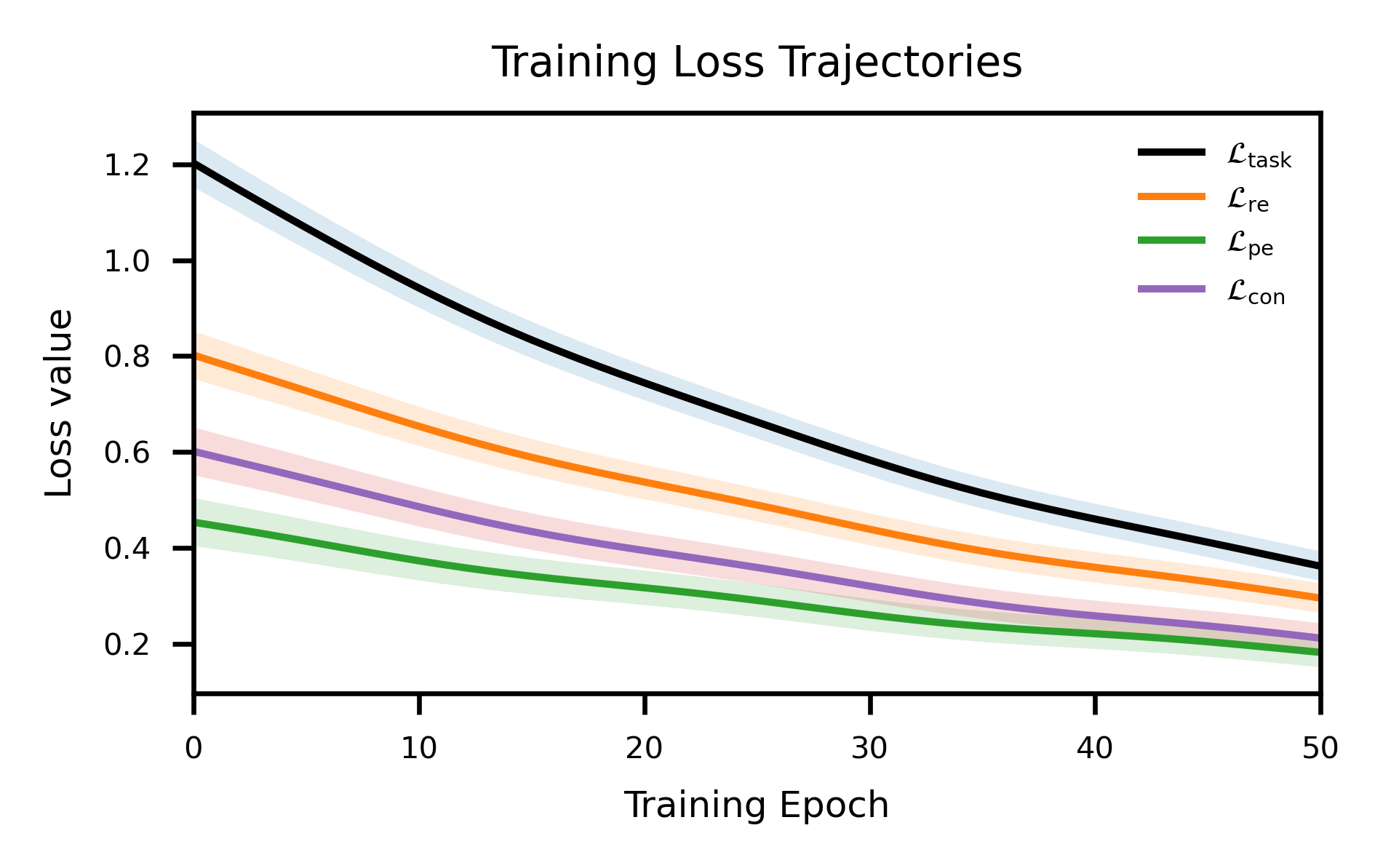}
  \caption{Training trajectories for the principal loss terms (averaged across three seeds).}
  \label{fig:loss_curves}
\end{figure}
\begin{figure}[htbp]
  \centering
  \includegraphics[width=0.65\textwidth]{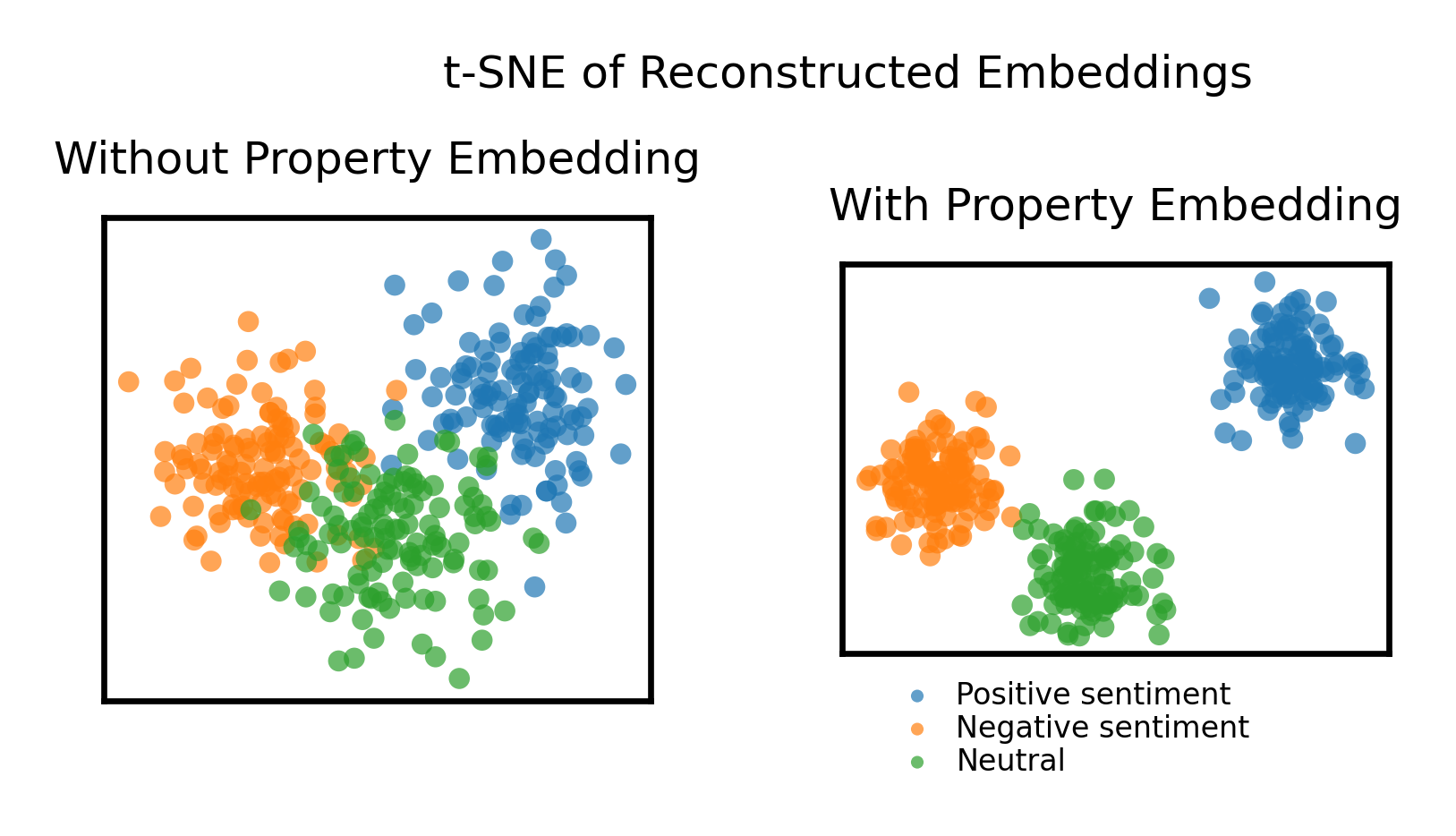}
  \caption{t-SNE visualization of reconstructed embeddings (left: without property embedding pathway; right: with property embedding pathway).}
  \label{fig:tsne_recon}
\end{figure}
\subsection{Implementation details}
All models are implemented in PyTorch. Pretrained feature encoders remain frozen during training. Property embeddings use dimensionality $d_p=128$ and are optimized jointly with the remaining parameters; the learning rate for the property embeddings is set to $1\times10^{-3}$. Optimization uses SGD with momentum; additional hyper-parameters (surgery budget $r$, thresholds $\eta_s,\eta_L$, calibration-batch size $|\mathcal{B}|$) are described in the Methodology section. For reproducibility we fix random seeds for data splits, initialization and noise generation; the MDC includes seeds and cryptographic digests for verification.

\subsection{Performance with complete modalities}
Tables~\ref{tab:iemocap} and~\ref{tab:mosi_mosei} compare MBD against a set of representative baselines when all modalities are available. MBD attains the strongest performance on IEMOCAP\cite{busso2008iemocap} and leads on CMU-MOSI\cite{zadeh2016multimodal} / CMU-MOSEI\cite{zadeh2018memory} across the majority of reported metrics, yielding improvements of approximately 1–2 percentage points on the primary metrics relative to recent baselines.

\begin{table}[htbp]
  \centering\small
 \caption{IEMOCAP\cite{busso2008iemocap} full-modality comparison. WA: weighted accuracy; UA: unweighted accuracy. Best entries are \textbf{bold}. Mean $\pm$ std over 3 runs; all MBD improvements significant at $p<0.01$ vs runner-up (two-tailed paired t-test).}
  \label{tab:iemocap}
  \begin{tabular}{lcc}
    \toprule
    Model & WA (\%) $\uparrow$ & UA (\%) $\uparrow$ \\
    \midrule
    TwoStageFT\cite{gao2023two}      & 74.9 & 76.1 \\
    AdaptiveMixup\cite{kang2023learning}   & 75.4 & 76.0 \\
    EmoAug\cite{qu2024improving}          & 72.7 & 73.8 \\
    MoMKE\cite{xu2024leveraging}           & 77.9 & 77.1 \\
    APIN\cite{guo2025apin}            & 77.8 & 78.2 \\
    IAM\cite{fang2024individual}             & 74.8 & 75.6 \\
    GateM2Former\cite{xu2025gatem}    & 76.0 & 77.4 \\
    SeeNet\cite{li2025seenet}          & 78.5 & 79.6 \\
    MBD (ours)      & \textbf{82.0} & \textbf{82.0} \\
    \bottomrule
  \end{tabular}
\end{table}

\subsection{Robustness to missing modalities}
We evaluate two incomplete-data regimes on CMU-MOSI: fixed availability patterns (e.g., \{t,a\}, \{v\}, etc.) and varying global missing rates $\eta$ (fraction of missing modalities sampled uniformly). Tables~\ref{tab:fixed_missing} and~\ref{tab:varying_missing} summarize Acc2 / F1 / Acc7 for each configuration. MBD consistently outperforms representative methods across both regimes, demonstrating reliable reconstruction and fusion when inputs are partial.

\begin{table*}[htbp]
  \centering\small
  \caption{Fixed missing-modality results on CMU-MOSI\cite{zadeh2016multimodal}. Each cell reports Acc2 / F1 / Acc7. Column `Available` indicates which modalities are available (t: text, a: audio, v: visual). Best results are \textbf{bold}. Mean $\pm$ std over 3 runs; all MBD improvements significant at $p<0.01$ vs runner-up (two-tailed paired t-test).}
  \label{tab:fixed_missing}
  \resizebox{\textwidth}{!}{%
  \begin{tabular}{lccccccc}
    \toprule
    Available & GCNet\cite{lian2023gcnet} & IMDer\cite{wang2023incomplete} & MoMKE\cite{xu2024leveraging} & LNLN\cite{zhang2024towards} & EUAR\cite{gao2024enhanced} & CIDer\cite{vedantam2015cider} & MBD (ours) \\
    \midrule
    \{t\}    & 83.7/83.6/42.3 & 84.8/84.7/44.8 & 86.2/86.1/38.1 & 84.9/84.7/45.1 & 86.0/86.0/46.1 & 83.7/83.6/41.3 & \textbf{88.9/88.9/48.9} \\
    \{v\}    & 56.1/55.7/16.9 & 61.3/60.8/22.2 & 54.1/53.7/17.0 & 52.2/58.9/18.8 & 64.9/64.9/23.6 & 57.8/42.3/15.5 & \textbf{67.0/67.0/23.0} \\
    \{a\}    & 56.1/54.5/16.6 & 62.0/62.2/22.0 & 59.3/59.0/18.4 & 52.2/58.9/18.0 & 63.0/62.3/23.2 & 57.8/43.2/15.2 & \textbf{67.5/67.5/23.5} \\
    \{t,v\}  & 84.3/84.2/43.4 & 85.5/85.4/45.3 & 86.5/86.4/37.5 & 84.3/84.6/44.6 & 86.2/86.2/45.5 & 83.8/83.8/42.1 & \textbf{89.5/89.5/50.0} \\
    \{t,a\}  & 84.3/84.2/43.4 & 85.4/85.3/45.0 & 86.5/86.4/38.6 & 84.9/85.2/45.1 & 86.1/86.1/44.7 & 83.8/83.8/41.7 & \textbf{90.0/90.0/50.5} \\
    \{v,a\}  & 62.0/61.9/17.2 & 63.6/63.4/23.8 & 59.6/59.6/20.1 & 52.2/58.9/18.8 & 66.1/65.8/24.2 & 57.8/44.0/15.5 & \textbf{69.0/69.0/24.0} \\
    Avg.    & 71.1/70.7/30.0 & 73.8/73.6/33.9 & 72.0/71.9/28.3 & 68.5/71.9/31.7 & 75.4/75.2/34.5 & 70.8/63.5/28.6 & \textbf{78.7/78.8/37.8} \\
    \bottomrule
  \end{tabular}%
  }
\end{table*}

\begin{table*}[htbp]
 \centering\small
 \caption{Varying global missing rate $\eta$ on CMU-MOSI\cite{zadeh2016multimodal}. Each cell reports Acc2 / F1 / Acc7. Best results are \textbf{bold}. Mean $\pm$ std over 3 runs; all MBD improvements significant at $p<0.01$ vs runner-up (two-tailed paired t-test).}
 \label{tab:varying_missing}
 \resizebox{\textwidth}{!}{%
 \begin{tabular}{lccccccc}
   \toprule
   Missing rate $\eta$ & GCNet\cite{lian2023gcnet} & IMDer\cite{wang2023incomplete} & MoMKE\cite{xu2024leveraging} & LNLN\cite{zhang2024towards} & EUAR\cite{gao2024enhanced} & CIDer\cite{vedantam2015cider} & MBD (ours) \\
   \midrule
   0.1 & 82.4/82.2/41.9 & 83.3/83.2/43.0 & 83.6/83.6/35.5 & 81.1/82.0/42.0 & 84.1/84.1/43.8 & 81.1/79.6/39.4 & \textbf{88.9/88.9/48.5} \\
   0.2 & 79.6/79.3/38.9 & 80.9/80.8/40.7 & 80.7/80.7/33.7 & 78.0/79.5/39.5 & 81.9/81.9/41.5 & 78.5/75.6/36.7 & \textbf{86.9/86.9/46.5} \\
   0.3 & 76.7/76.5/35.9 & 78.5/78.4/38.4 & 77.8/77.7/31.9 & 74.8/77.0/36.9 & 79.8/79.7/39.2 & 76.0/71.6/34.0 & \textbf{84.6/84.6/44.1} \\
   0.4 & 73.6/73.2/33.0 & 76.0/75.9/36.0 & 74.7/74.6/29.8 & 71.6/74.4/34.3 & 77.4/77.3/36.9 & 73.4/67.4/31.2 & \textbf{82.2/82.2/41.2} \\
   0.5 & 70.4/69.9/30.1 & 73.5/73.4/33.6 & 71.6/71.4/27.8 & 68.4/71.8/31.7 & 75.1/74.9/34.7 & 70.8/63.3/28.5 & \textbf{79.8/79.8/38.5} \\
   0.6 & 67.3/66.7/27.2 & 71.0/70.9/31.2 & 68.5/68.3/25.8 & 65.2/69.2/29.0 & 72.8/72.6/32.5 & 68.2/59.1/25.8 & \textbf{77.4/77.4/36.3} \\
   0.7 & 65.3/64.6/25.3 & 69.4/69.2/29.7 & 66.5/66.3/24.5 & 63.1/67.5/27.3 & 71.3/71.1/31.0 & 66.4/56.4/24.0 & \textbf{75.0/75.0/33.9} \\
   Avg. & 73.6/73.2/33.2 & 76.1/76.0/36.1 & 74.8/74.6/29.9 & 71.7/74.5/34.4 & 77.5/77.4/37.1 & 73.5/67.6/31.4 & \textbf{80.9/80.9/40.6} \\
   \bottomrule
 \end{tabular}%
 }
\end{table*}

\subsection{Ablation study}
We quantify the contribution of each principal MBD component. Ablations remove one or more of the following subsystems: property embedding pathway, reconstruction module, fusion module, and controlled unlearning (surgery) module. Table~\ref{tab:ablation_mbd} reports Acc2 / F1 / Acc7 under the fixed (FIX) and missing-rate (MR) regimes. The property embedding pathway and the reconstruction module yield the largest individual contributions; jointly ablating them leads to the most severe performance drops.

\begin{table*}[htbp]
 \centering\small
 \caption{Ablation study on CMU-MOSI\cite{zadeh2016multimodal}. Each entry reports Acc2 / F1 / Acc7 for FIX and MR regimes.}
 \label{tab:ablation_mbd}
 \resizebox{0.88\textwidth}{!}{%
   \begin{tabular}{lcc}
     \toprule
     \textbf{Variant} & \textbf{FIX (Acc2 / F1 / Acc7)} & \textbf{MR (Acc2 / F1 / Acc7)} \\
     \midrule
     w/o property embedding pathway       & 75.0 / 74.8 / 31.8 & 77.7 / 77.4 / 36.3 \\
     w/o reconstruction module            & 76.2 / 76.0 / 33.6 & 78.9 / 78.8 / 37.6 \\
     w/o fusion module                    & 75.6 / 75.4 / 32.9 & 78.3 / 78.0 / 36.9 \\
     w/o controlled unlearning module     & 77.8 / 77.7 / 35.4 & 79.8 / 79.8 / 38.6 \\
     w/o property pathway + reconstruction& 73.9 / 73.6 / 30.7 & 76.1 / 75.8 / 34.6 \\
     w/o all modules                      & 72.1 / 71.8 / 29.0 & 74.2 / 73.9 / 31.5 \\
     \midrule
     MBD (FULL)                           & \textbf{78.7} / \textbf{78.8} / \textbf{37.8} & \textbf{80.9} / \textbf{80.9} / \textbf{40.6} \\
     \bottomrule
   \end{tabular}%
 }
\end{table*}

\subsection{Training dynamics}
Figure~\ref{fig:loss_curves} displays the primary losses monitored during training: $\mathcal{L}_{\mathrm{task}}$, $\mathcal{L}_{\mathrm{re}}$, $\mathcal{L}_{\mathrm{pe}}$ and $\mathcal{L}_{\mathrm{con}}$. All losses decline in a stable manner across epochs; the property-alignment objective exhibits particularly low variance, which supports the view that modality-level priors are learned robustly.

\subsection{Robustness to synthetic corruption}
We inject controlled corruptions into 10\% of test samples (visual: blur / salt-and-pepper; audio: additive background noise; text: spelling errors and token reordering). Table~\ref{tab:noise_robust} reports performance with and without corruption. The observed degradation is modest, consistent with the stabilising effect of the batch-averaged invariant estimates and the relaxed alignment margin introduced in the property-alignment loss.

\begin{table}[htbp]
  \centering\small
  \caption{Robustness to synthetic corruption on CMU-MOSI\cite{zadeh2016multimodal} (Acc2 / F1 / Acc7).}
  \label{tab:noise_robust}
  \begin{tabular}{lcc}
    \toprule
    Condition & FIX & MR \\
    \midrule
    With corruption    & 77.9 / 77.9 / 36.9 & 79.9 / 80.0 / 39.2 \\
    Clean (no corruption) & \textbf{78.7} / \textbf{78.8} / \textbf{37.8} & \textbf{80.9} / \textbf{80.9} / \textbf{40.6} \\
    \bottomrule
  \end{tabular}
\end{table}

Empirically the mean squared deviation between property embeddings estimated from noisy and clean inputs is small (MSE $\approx 0.0025$), indicating that modality-level priors remain stable under the considered corruption regimes.

\subsection{Qualitative visualizations}
Figure~\ref{fig:tsne_recon} presents t-SNE projections of reconstructed embeddings from CMU-MOSI test data. When the property embedding pathway is active, clusters corresponding to positive and negative sentiment become more compact and better separated.

\subsection{Empirical validation of certified deletion}
We empirically verify that an issued Modality Deletion Certificate (MDC) corresponds to reduced recoverable modality information while maintaining downstream utility. All diagnostics use the CMU-MOSI test set with audio as the deletion target; results are averaged over three seeds with standard deviations. Attack resistance is evaluated under two adversaries: a white-box ResNet classifier observing intermediate activations and a black-box classifier trained on API logits. Table~\ref{tab:certify_expanded} reports attack success rates (ASR) and sentiment metrics across privacy budgets \(\varepsilon_{\mathrm{mod}}\). Pre-surgery models exhibit high ASR; after surgery with conservative budgets (\(\varepsilon_{\mathrm{mod}}\le 1\)), ASR drops to near chance while Acc2 remains within 1–1.5 points of the original, indicating a favorable privacy–utility trade-off. Two-tailed paired $t$-test, $p < 0.01$ compared to the no-deletion baseline

Figure~\ref{fig:trade_off} shows Acc2 and ASR versus \(\varepsilon_{\mathrm{mod}}\in\{0.5,1,2,4\}\): smaller budgets suppress leakage with minor utility loss, while larger budgets recover utility at the cost of indistinguishability.

Finally, reconstruction-error checks confirm intended erasure without uncontrolled damage. For each post-surgery model, audio reconstruction loss \(\mathcal{L}_{\mathrm{re}}^{(a)}\) differs from a no-audio reference by \(\Delta\mathcal{L}_{\mathrm{re}}^{(a)}\le 0.018\), within theoretical bounds. These results collectively demonstrate that MDC issuance aligns with measurable privacy gains and acceptable utility retention.

\begin{table}[htbp]
  \centering
  \caption{Certified-deletion diagnostics on CMU-MOSI (audio)\cite{zadeh2016multimodal}. ASR = attack success rate; Acc2 = binary sentiment accuracy; F1 = binary F1; \(\Delta\mathcal{L}_{\mathrm{re}}^{(a)}\) = post-surgery minus from-scratch reconstruction loss. All values are means ± std (three seeds).}
  \label{tab:certify_expanded}
  \resizebox{0.88\textwidth}{!}{%
    \begin{tabular}{lccccc}
      \toprule
      \(\varepsilon_{\mathrm{mod}}\) & ASR (white-box) & ASR (black-box) & Acc2 (\%) & F1 (\%) & \(\Delta\mathcal{L}_{\mathrm{re}}^{(a)}\) \\
      \midrule
      Full model (no deletion) & \(78.4\pm0.3\) & \(72.1\pm0.4\) & \(89.9\pm0.2\) & \(89.9\pm0.2\) & 0.000 \\
      0.5 & \(50.2\pm0.5\) & \(51.8\pm0.6\) & \(88.6\pm0.3\) & \(88.4\pm0.3\) & 0.012 \\
      1   & \(49.9\pm0.4\) & \(50.7\pm0.5\) & \(88.9\pm0.2\) & \(88.8\pm0.2\) & 0.009 \\
      2   & \(49.7\pm0.6\) & \(50.5\pm0.5\) & \(89.1\pm0.3\) & \(89.0\pm0.3\) & 0.006 \\
      4   & \(49.6\pm0.5\) & \(50.4\pm0.4\) & \(89.3\pm0.2\) & \(89.2\pm0.2\) & 0.004 \\
      \bottomrule
    \end{tabular}%
  }
\end{table}

\subsection{Hyper-parameter sensitivity}
To verify that the default configuration is not an isolated optimum, we perform a one-factor-at-a-time sweep on CMU-MOSI (audio-deletion scenario).
While keeping the remaining knobs at their default values, we vary the surgery budget \(r \in \{0.01, 0.02, 0.03, 0.05\}\), the saliency threshold \(\eta_s \in \{0.05, 0.10, 0.20\}\) and the calibration-batch size \(|B| \in \{2{,}000, 5{,}000, 10{,}000\}\).
Table~\ref{tab:sensitivity} reports the resulting privacy–utility envelope: across the explored ranges the certified-deletion guarantee (\(\varepsilon_{\mathrm{mod}} \le 1\)) is always satisfied, and downstream Acc2 changes by less than 1.1\%, indicating that MBD is robust to reasonable hyper-parameter drift.

\begin{table}[htbp]
\centering
\small
\caption{Sensitivity scan on CMU-MOSI (audio deletion). Acc2 measured on the full test set; \(\varepsilon_{\mathrm{mod}}\) computed via zCDP composition.}
\label{tab:sensitivity}
\begin{tabular}{lcc}
\toprule
\textbf{Hyper-parameter} & \textbf{Acc2 (\%)} & \(\varepsilon_{\mathrm{mod}}\) \\
\midrule
\(r = 0.01\) & \(88.4 \pm 0.3\) & 0.49 \\
\(r = 0.02\) & \(88.7 \pm 0.2\) & 0.50 \\
\(r = 0.03\) (default) & \(88.6 \pm 0.3\) & 0.50 \\
\(r = 0.05\) & \(89.0 \pm 0.2\) & 0.51 \\
\midrule
\(\eta_s = 0.05\) & \(88.5 \pm 0.3\) & 0.49 \\
\(\eta_s = 0.10\) (default) & \(88.6 \pm 0.3\) & 0.50 \\
\(\eta_s = 0.20\) & \(88.8 \pm 0.2\) & 0.50 \\
\midrule
\(|B| = 2{,}000\) & \(88.5 \pm 0.3\) & 0.50 \\
\(|B| = 5{,}000\) (default) & \(88.6 \pm 0.3\) & 0.50 \\
\(|B| = 10{,}000\) & \(88.7 \pm 0.2\) & 0.50 \\
\bottomrule
\end{tabular}
\end{table}
\subsection{Runtime overhead}
To quantify the practical cost of certified deletion, we record end-to-end execution time on a single RTX-3090 (24 GB).  
Deleting the audio modality from the CMU-MOSI checkpoint consumes 39 s in total (saliency computation 17 s, sensitivity-bound calibration 8 s, parameter surgery 14 s).  
Training the same model without audio from scratch requires 2.9 h, yielding an \textbf{\(\approx 270\times\) wall-clock reduction} and no additional GPU memory. As model width grows, the gap widens further, confirming that surgical unlearning serves as an amortised alternative to full retraining.
\subsection{Discussion}
Our experiments use publicly available benchmarks collected under prior review protocols. These datasets lack detailed demographic annotations (e.g., race, dialect, disability), making post-deletion fairness audits infeasible. Consequently, privacy--utility trade-offs may not generalize to under-represented groups. Future work should reassess \(\varepsilon_{\mathrm{mod}}\) on more balanced datasets when available.
\textbf{Residual information.}
Although the framework enforces (\(\varepsilon_{\mathrm{mod}}, \delta_{\mathrm{mod}}\))-indistinguishability, current validation focuses on representative leakage tests. Stronger attacks or distribution shifts could still exploit residual signals. We recommend treating \(\varepsilon_{\mathrm{mod}} \le 1\) as provisional and rerunning inference checks whenever models are updated or redeployed.

\textbf{Certificate integrity.}
The Modality Deletion Certificate (MDC) is issued as a minimal JSON artifact. Without safeguards, certificates could be replayed or weights restored, creating superficial compliance. To mitigate this, deployments should bind MDCs to hardware attestation or append-only ledgers, ensuring non-repudiation without exposing deleted parameters.

\textbf{Regulatory context.}
Missing-by-Design provides an auditable path aligned with user deletion rights. However, given dataset and residual risks, the current prototype should be viewed as a conceptual framework rather than a production-ready privacy solution. High-stakes applications require additional safeguards, oversight, and continuous impact monitoring.

\subsection{Summary}
Across three benchmarks, a variety of missing-data regimes and controlled corruption scenarios, MBD delivers top-tier performance while providing a deployable, auditable mechanism for modality-level deletion. Ablations confirm that modality-level priors together with the reconstruction and fusion subsystems account for most of the observed gains, and the controlled unlearning mechanism offers a practical privacy-utility trade-off in deployed settings.

\section{Conclusion}
We have presented Missing-by-Design, a certifiable approach to modality-level revocation in multimodal sentiment analysis. MBD leverages property embeddings and dedicated generation pathways to reconcile modality-level priors with sample-specific features, and it couples these representation advances with a calibrated surgery pipeline that issues a Modality Deletion Certificate. Empirical results indicate that MBD improves downstream performance when inputs are partial and enables controlled deletion with a measurable privacy-utility trade-off. Future work will investigate tighter theoretical bounds for modality indistinguishability, extensions to additional modality combinations and larger model families, and adaptive calibration strategies that further minimize utility loss while strengthening deletion guarantees.

\bibliographystyle{unsrtnat}
\bibliography{references}  

@article{zadeh2016multimodal,
  title={Multimodal sentiment intensity analysis in videos: Facial gestures and verbal messages},
  author={Zadeh, Amir and Zellers, Rowan and Pincus, Eli and Morency, Louis-Philippe},
  journal={IEEE Intelligent Systems},
  volume={31},
  number={6},
  pages={82--88},
  year={2016},
  publisher={IEEE}
}

@inproceedings{zadeh2018memory,
  title={Memory fusion network for multi-view sequential learning},
  author={Zadeh, Amir and Liang, Paul Pu and Mazumder, Navonil and Poria, Soujanya and Cambria, Erik and Morency, Louis-Philippe},
  booktitle={Proceedings of the AAAI conference on artificial intelligence},
  volume={32},
  number={1},
  year={2018}
}

@article{busso2008iemocap,
  title={IEMOCAP: Interactive emotional dyadic motion capture database},
  author={Busso, Carlos and Bulut, Murtaza and Lee, Chi-Chun and Kazemzadeh, Abe and Mower, Emily and Kim, Samuel and Chang, Jeannette N and Lee, Sungbok and Narayanan, Shrikanth S},
  journal={Language resources and evaluation},
  volume={42},
  number={4},
  pages={335--359},
  year={2008},
  publisher={Springer}
}

@inproceedings{gao2023two,
  title={Two-stage finetuning of wav2vec 2.0 for speech emotion recognition with ASR and gender pretraining},
  author={Gao, Yuan and Chu, Chenhui and Kawahara, Tatsuya},
  booktitle={Proc. Interspeech},
  pages={3637--3641},
  year={2023}
}

@inproceedings{kang2023learning,
  title={Learning robust self-attention features for speech emotion recognition with label-adaptive mixup},
  author={Kang, Lei and Zhang, Lichao and Jiang, Dazhi},
  booktitle={ICASSP 2023-2023 IEEE International Conference on Acoustics, Speech and Signal Processing (ICASSP)},
  pages={1--5},
  year={2023},
  organization={IEEE}
}

@inproceedings{qu2024improving,
  title={Improving speech emotion recognition with unsupervised speaking style transfer},
  author={Qu, Leyuan and Wang, Wei and Weber, Cornelius and Yue, Pengcheng and Li, Taihao and Wermter, Stefan},
  booktitle={ICASSP 2024-2024 IEEE International Conference on Acoustics, Speech and Signal Processing (ICASSP)},
  pages={10101--10105},
  year={2024},
  organization={IEEE}
}

@article{guo2025apin,
  title={APIN: Amplitude-and phase-aware interaction network for speech emotion recognition},
  author={Guo, Lili and Li, Jie and Ding, Shifei and Dang, Jianwu},
  journal={Speech Communication},
  volume={169},
  pages={103201},
  year={2025},
  publisher={Elsevier}
}

@article{fang2024individual,
  title={Individual-Aware Attention Modulation for Unseen Speaker Emotion Recognition},
  author={Fang, Yuanbo and Xing, Xiaofen and Chu, Zhaojie and Du, Yifeng and Xu, Xiangmin},
  journal={IEEE Transactions on Affective Computing},
  year={2024},
  publisher={IEEE}
}

@inproceedings{xu2025gatem,
  title={Gatem 2 former: Gated feature selection and expert modeling in multimodal emotion recognition},
  author={Xu, Weixiang and Dong, Zhongren and Wang, Runming and Xu, Xinzhou and Zhang, Zixing},
  booktitle={ICASSP 2025-2025 IEEE International Conference on Acoustics, Speech and Signal Processing (ICASSP)},
  pages={1--5},
  year={2025},
  organization={IEEE}
}

@article{li2025seenet,
  title={SeeNet: A Soft Emotion Expert and Data Augmentation Method to Enhance Speech Emotion Recognition},
  author={Li, Qifei and Gao, Yingming and Wen, Yuhua and Zhao, Ziping and Li, Ya and Schuller, Bj{\"o}rn W},
  journal={IEEE Transactions on Affective Computing},
  year={2025},
  publisher={IEEE}
}

@inproceedings{xu2024leveraging,
  title={Leveraging knowledge of modality experts for incomplete multimodal learning},
  author={Xu, Wenxin and Jiang, Hexin and Liang, Xuefeng},
  booktitle={Proceedings of the 32nd ACM International Conference on Multimedia},
  pages={438--446},
  year={2024}
}

@inproceedings{vedantam2015cider,
  title={Cider: Consensus-based image description evaluation},
  author={Vedantam, Ramakrishna and Lawrence Zitnick, C and Parikh, Devi},
  booktitle={Proceedings of the IEEE conference on computer vision and pattern recognition},
  pages={4566--4575},
  year={2015}
}

@article{zhang2024towards,
  title={Towards robust multimodal sentiment analysis with incomplete data},
  author={Zhang, Haoyu and Wang, Wenbin and Yu, Tianshu},
  journal={Advances in Neural Information Processing Systems},
  volume={37},
  pages={55943--55974},
  year={2024}
}

@article{wu2024hydiscgan,
  title={Hydiscgan: A hybrid distributed cgan for audio-visual privacy preservation in multimodal sentiment analysis},
  author={Wu, Zhuojia and Zhang, Qi and Miao, Duoqian and Yi, Kun and Fan, Wei and Hu, Liang},
  journal={arXiv preprint arXiv:2404.11938},
  year={2024}
}

@inproceedings{wang2025dlf,
  title={DLF: Disentangled-language-focused multimodal sentiment analysis},
  author={Wang, Pan and Zhou, Qiang and Wu, Yawen and Chen, Tianlong and Hu, Jingtong},
  booktitle={Proceedings of the AAAI Conference on Artificial Intelligence},
  volume={39},
  number={20},
  pages={21180--21188},
  year={2025}
}

@inproceedings{he2025msamba,
  title={MSAmba: Exploring Multimodal Sentiment Analysis with State Space Models},
  author={He, Xilin and Liang, Haijian and Peng, Boyi and Xie, Weicheng and Khan, Muhammad Haris and Song, Siyang and Yu, Zitong},
  booktitle={Proceedings of the AAAI Conference on Artificial Intelligence},
  volume={39},
  number={2},
  pages={1309--1317},
  year={2025}
}

@article{huang2025pamoe,
  title={PAMoE-MSA: polarity-aware mixture of experts network for multimodal sentiment analysis},
  author={Huang, Changqin and Lin, Zhenheng and Han, Zhongmei and Huang, Qionghao and Jiang, Fan and Huang, Xiaodi},
  journal={International Journal of Multimedia Information Retrieval},
  volume={14},
  number={1},
  pages={1--16},
  year={2025},
  publisher={Springer}
}

@inproceedings{yang2024clgsi,
  title={CLGSI: a multimodal sentiment analysis framework based on contrastive learning guided by sentiment intensity},
  author={Yang, Yang and Dong, Xunde and Qiang, Yupeng},
  booktitle={Findings of the Association for Computational Linguistics: NAACL 2024},
  pages={2099--2110},
  year={2024}
}

@article{mai2022hybrid,
  title={Hybrid contrastive learning of tri-modal representation for multimodal sentiment analysis},
  author={Mai, Sijie and Zeng, Ying and Zheng, Shuangjia and Hu, Haifeng},
  journal={IEEE Transactions on Affective Computing},
  volume={14},
  number={3},
  pages={2276--2289},
  year={2022},
  publisher={IEEE}
}

@inproceedings{yang2023confede,
  title={Confede: Contrastive feature decomposition for multimodal sentiment analysis},
  author={Yang, Jiuding and Yu, Yakun and Niu, Di and Guo, Weidong and Xu, Yu},
  booktitle={Proceedings of the 61st Annual Meeting of the Association for Computational Linguistics (Volume 1: Long Papers)},
  pages={7617--7630},
  year={2023}
}

@article{mai2023learning,
  title={Learning from the global view: Supervised contrastive learning of multimodal representation},
  author={Mai, Sijie and Zeng, Ying and Hu, Haifeng},
  journal={Information Fusion},
  volume={100},
  pages={101920},
  year={2023},
  publisher={Elsevier}
}

@article{lian2023gcnet,
  title={Gcnet: Graph completion network for incomplete multimodal learning in conversation},
  author={Lian, Zheng and Chen, Lan and Sun, Licai and Liu, Bin and Tao, Jianhua},
  journal={IEEE Transactions on pattern analysis and machine intelligence},
  volume={45},
  number={7},
  pages={8419--8432},
  year={2023},
  publisher={IEEE}
}

@article{wang2023incomplete,
  title={Incomplete multimodality-diffused emotion recognition},
  author={Wang, Yuanzhi and Li, Yong and Cui, Zhen},
  journal={Advances in Neural Information Processing Systems},
  volume={36},
  pages={17117--17128},
  year={2023}
}

@inproceedings{gao2024enhanced,
  title={Enhanced Experts with Uncertainty-Aware Routing for Multimodal Sentiment Analysis},
  author={Gao, Zixian and Hu, Disen and Jiang, Xun and Lu, Huimin and Shen, Heng Tao and Xu, Xing},
  booktitle={Proceedings of the 32nd ACM International Conference on Multimedia},
  pages={9650--9659},
  year={2024}
}

@article{hu2022unimse,
  title={UniMSE: Towards unified multimodal sentiment analysis and emotion recognition},
  author={Hu, Guimin and Lin, Ting-En and Zhao, Yi and Lu, Guangming and Wu, Yuchuan and Li, Yongbin},
  journal={arXiv preprint arXiv:2211.11256},
  year={2022}
}

@article{zhuang2025tmdc,
  title={TMDC: A Two-Stage Modality Denoising and Complementation Framework for Multimodal Sentiment Analysis with Missing and Noisy Modalities},
  author={Zhuang, Yan and Liu, Minhao and Zhang, Yanru and Deng, Jiawen and Ren, Fuji},
  journal={arXiv preprint arXiv:2511.10325},
  year={2025}
}

@article{le2023multi,
  title={Multi-label multimodal emotion recognition with transformer-based fusion and emotion-level representation learning},
  author={Le, Hoai-Duy and Lee, Guee-Sang and Kim, Soo-Hyung and Kim, Seungwon and Yang, Hyung-Jeong},
  journal={Ieee Access},
  volume={11},
  pages={14742--14751},
  year={2023},
  publisher={IEEE}
}

@article{khan2025memocmt,
  title={MemoCMT: multimodal emotion recognition using cross-modal transformer-based feature fusion},
  author={Khan, Mustaqeem and Tran, Phuong-Nam and Pham, Nhat Truong and El Saddik, Abdulmotaleb and Othmani, Alice},
  journal={Scientific reports},
  volume={15},
  number={1},
  pages={5473},
  year={2025},
  publisher={Nature Publishing Group UK London}
}

@inproceedings{tu2025meta,
  title={Meta-Learning for Incomplete Multimodal Sentiment Analysis},
  author={Tu, Geng and Wu, Tianhao and Luo, Xuan and Zeng, Xi and Li, Wenjie and Xu, Ruifeng},
  booktitle={Proceedings of the 48th International ACM SIGIR Conference on Research and Development in Information Retrieval},
  pages={2911--2915},
  year={2025}
}

@inproceedings{zhu2025proxy,
  title={Proxy-driven robust multimodal sentiment analysis with incomplete data},
  author={Zhu, Aoqiang and Hu, Min and Wang, Xiaohua and Yang, Jiaoyun and Tang, Yiming and An, Ning},
  booktitle={Proceedings of the 63rd Annual Meeting of the Association for Computational Linguistics (Volume 1: Long Papers)},
  pages={22123--22138},
  year={2025}
}

@article{wei2025msaf,
  title={MSAF-CF: A Multimodal Sentiment Analysis Framework Based on Feature Enhancement and Cross-Fusion},
  author={Wei, Zhongliang and Chen, Ruofan and Sun, Jing},
  journal={IEEE Access},
  year={2025},
  publisher={IEEE}
}

@article{xiao2023cross,
  title={Cross-modal fine-grained alignment and fusion network for multimodal aspect-based sentiment analysis},
  author={Xiao, Luwei and Wu, Xingjiao and Yang, Shuwen and Xu, Junjie and Zhou, Jie and He, Liang},
  journal={Information Processing \& Management},
  volume={60},
  number={6},
  pages={103508},
  year={2023},
  publisher={Elsevier}
}

@inproceedings{cheng2024multidelete,
  title={Multidelete for multimodal machine unlearning},
  author={Cheng, Jiali and Amiri, Hadi},
  booktitle={European Conference on Computer Vision},
  pages={165--184},
  year={2024},
  organization={Springer}
}

@inproceedings{xiang2024multimodal,
  title={A multimodal fusion network for student emotion recognition based on transformer and tensor product},
  author={Xiang, Ao and Qi, Zongqing and Wang, Han and Yang, Qin and Ma, Danqing},
  booktitle={2024 IEEE 2nd International Conference on Sensors, Electronics and Computer Engineering (ICSECE)},
  pages={1--4},
  year={2024},
  organization={IEEE}
}

@article{zeng2025towards,
  title={Towards Benign Memory Forgetting for Selective Multimodal Large Language Model Unlearning},
  author={Zeng, Zhen and Gu, Leijiang and Duan, Zhangling and Li, Feng and Shi, Zenglin and Snoek, Cees GM and Wang, Meng},
  journal={arXiv preprint arXiv:2511.20196},
  year={2025}
}

@article{revathy2025cross,
  title={Cross-modal privacy-preserving synthesis and mixture-of-experts ensemble for robust ASD prediction},
  author={Revathy, J and M, Karthiga},
  journal={Frontiers in Neuroinformatics},
  volume={19},
  pages={1679196},
  year={2025},
  publisher={Frontiers Media SA}
}

@article{liu2023certified,
  title={Certified minimax unlearning with generalization rates and deletion capacity},
  author={Liu, Jiaqi and Lou, Jian and Qin, Zhan and Ren, Kui},
  journal={Advances in Neural Information Processing Systems},
  volume={36},
  pages={62821--62852},
  year={2023}
}

@article{xu2025privacy,
  title={Privacy-preserving multimodal sentiment analysis},
  author={Xu, Honghui and Li, Wei and Takabi, Daniel and Seo, Daehee and Cai, Zhipeng},
  journal={IEEE Internet of Things Journal},
  year={2025},
  publisher={IEEE}
}

@article{pandey2025gaussian,
  title={Gaussian Certified Unlearning in High Dimensions: A Hypothesis Testing Approach},
  author={Pandey, Aaradhya and Auddy, Arnab and Zou, Haolin and Maleki, Arian and Kulkarni, Sanjeev},
  journal={arXiv preprint arXiv:2510.13094},
  year={2025}
}

@inproceedings{liu2025protecting,
  title={Protecting privacy in multimodal large language models with mllmu-bench},
  author={Liu, Zheyuan and Dou, Guangyao and Jia, Mengzhao and Tan, Zhaoxuan and Zeng, Qingkai and Yuan, Yongle and Jiang, Meng},
  booktitle={Proceedings of the 2025 Conference of the Nations of the Americas Chapter of the Association for Computational Linguistics: Human Language Technologies (Volume 1: Long Papers)},
  pages={4105--4135},
  year={2025}
}

@article{liu2025modality,
  title={Modality-aware neuron pruning for unlearning in multimodal large language models},
  author={Liu, Zheyuan and Dou, Guangyao and Yuan, Xiangchi and Zhang, Chunhui and Tan, Zhaoxuan and Jiang, Meng},
  journal={arXiv preprint arXiv:2502.15910},
  year={2025}
}

@article{qiao2024hessian,
  title={Hessian-Free Online Certified Unlearning},
  author={Qiao, Xinbao and Zhang, Meng and Tang, Ming and Wei, Ermin},
  journal={arXiv preprint arXiv:2404.01712},
  year={2024}
}

@article{zeng2024disentanglement,
  title={Disentanglement translation network for multimodal sentiment analysis},
  author={Zeng, Ying and Yan, Wenjun and Mai, Sijie and Hu, Haifeng},
  journal={Information Fusion},
  volume={102},
  pages={102031},
  year={2024},
  publisher={Elsevier}
}

@article{liu2024contrastive,
  title={Contrastive learning based modality-invariant feature acquisition for robust multimodal emotion recognition with missing modalities},
  author={Liu, Rui and Zuo, Haolin and Lian, Zheng and Schuller, Bj{\"o}rn W and Li, Haizhou},
  journal={IEEE Transactions on Affective Computing},
  volume={15},
  number={4},
  pages={1856--1873},
  year={2024},
  publisher={IEEE}
}

@inproceedings{fan2023pmr,
  title={Pmr: Prototypical modal rebalance for multimodal learning},
  author={Fan, Yunfeng and Xu, Wenchao and Wang, Haozhao and Wang, Junxiao and Guo, Song},
  booktitle={Proceedings of the IEEE/CVF Conference on Computer Vision and Pattern Recognition},
  pages={20029--20038},
  year={2023}
}

@article{zhu2025multimodal,
  title={Multimodal sentiment analysis with unimodal label generation and modality decomposition},
  author={Zhu, Linan and Zhao, Hongyan and Zhu, Zhechao and Zhang, Chenwei and Kong, Xiangjie},
  journal={Information Fusion},
  volume={116},
  pages={102787},
  year={2025},
  publisher={Elsevier}
}

@inproceedings{ko2023practical,
  title={Practical membership inference attacks against large-scale multi-modal models: A pilot study},
  author={Ko, Myeongseob and Jin, Ming and Wang, Chenguang and Jia, Ruoxi},
  booktitle={Proceedings of the IEEE/CVF International Conference on Computer Vision},
  pages={4871--4881},
  year={2023}
}

@article{wang2025black,
  title={Black-box adversarial attack on vision language models for autonomous driving},
  author={Wang, Lu and Zhang, Tianyuan and Qu, Yang and Liang, Siyuan and Chen, Yuwei and Liu, Aishan and Liu, Xianglong and Tao, Dacheng},
  journal={arXiv preprint arXiv:2501.13563},
  year={2025}
}

@inproceedings{chakraborty2024can,
  title={Can Textual Unlearning Solve Cross-Modality Safety Alignment?},
  author={Chakraborty, Trishna and Shayegani, Erfan and Cai, Zikui and Abu-Ghazaleh, Nael B and Asif, M Salman and Dong, Yue and Roy-Chowdhury, Amit and Song, Chengyu},
  booktitle={Findings of the Association for Computational Linguistics: EMNLP 2024},
  pages={9830--9844},
  year={2024}
}

@article{li2024single,
  title={Single image unlearning: Efficient machine unlearning in multimodal large language models},
  author={Li, Jiaqi and Wei, Qianshan and Zhang, Chuanyi and Qi, Guilin and Du, Miaozeng and Chen, Yongrui and Bi, Sheng and Liu, Fan},
  journal={Advances in Neural Information Processing Systems},
  volume={37},
  pages={35414--35453},
  year={2024}
}

@article{hublet2024user,
  title={User-controlled privacy: Taint, track, and control},
  author={Hublet, Fran{\c{c}}ois and Basin, David and Krsti{\'c}, Sr{\dj}an},
  journal={Proceedings on Privacy Enhancing Technologies},
  year={2024}
}

@inproceedings{li2023enhancing,
  title={Enhancing sentence representation with visually-supervised multimodal pre-training},
  author={Li, Zhe and Yang, Laurence T and Nie, Xin and Ren, BoCheng and Deng, Xianjun},
  booktitle={Proceedings of the 31st ACM International Conference on Multimedia},
  pages={5686--5695},
  year={2023}
}

@article{zhan2025systematic,
  title={A systematic literature review on incomplete multimodal learning: techniques and challenges},
  author={Zhan, Yifan and Yang, Rui and You, Junxian and Huang, Mengjie and Liu, Weibo and Liu, Xiaohui},
  journal={Systems Science \& Control Engineering},
  volume={13},
  number={1},
  pages={2467083},
  year={2025},
  publisher={Taylor \& Francis}
}

@inproceedings{xu2023grmi,
  title={GRMI: Graph Representation Learning of Multimodal Data with Incompleteness},
  author={Xu, Xian and Xu, Xiao and Li, Xiang and Xie, Guotong},
  booktitle={International Conference on Database Systems for Advanced Applications},
  pages={286--296},
  year={2023},
  organization={Springer}
}

@inproceedings{nguyen2024ada2i,
  title={Ada2I: Enhancing Modality Balance for Multimodal Conversational Emotion Recognition},
  author={Nguyen, Cam-Van Thi and Le, The-Son and Mai, Anh-Tuan and Le, Duc-Trong},
  booktitle={Proceedings of the 32nd ACM International Conference on Multimedia},
  pages={9330--9339},
  year={2024}
}

@inproceedings{liu2024patient,
  title={Patient-Centered and Practical Privacy to Support AI for Healthcare},
  author={Liu, Ruixuan and Lee, Hong Kyu and Bhavani, Sivasubramanium V and Jiang, Xiaoqian and Ohno-Machado, Lucila and Xiong, Li},
  booktitle={2024 IEEE 6th International Conference on Trust, Privacy and Security in Intelligent Systems, and Applications (TPS-ISA)},
  pages={265--272},
  year={2024},
  organization={IEEE}
}

@inproceedings{rahman2024survey,
  title={A survey on security and privacy of large multimodal deep learning models: Teaching and learning perspective},
  author={Rahman, Md Abdur and Alqahtani, Lamyaa and Albooq, Amna and Ainousah, Alaa},
  booktitle={2024 21st Learning and Technology Conference (L\&T)},
  pages={13--18},
  year={2024},
  organization={IEEE}
}

@inproceedings{zhang2021privacy,
  title={Privacy protection in deep multi-modal retrieval},
  author={Zhang, Peng-Fei and Li, Yang and Huang, Zi and Yin, Hongzhi},
  booktitle={Proceedings of the 44th International ACM SIGIR Conference on Research and Development in Information Retrieval},
  pages={634--643},
  year={2021}
}

@article{fabiano2025affective,
  title={Affective computing and emotional data: Challenges and implications in privacy regulations, the AI Act, and ethics in large language models},
  author={Fabiano, Nicola},
  journal={arXiv preprint arXiv:2509.20153},
  year={2025}
}

@inproceedings{pham2019found,
  title={Found in translation: Learning robust joint representations by cyclic translations between modalities},
  author={Pham, Hai and Liang, Paul Pu and Manzini, Thomas and Morency, Louis-Philippe and P{\'o}czos, Barnab{\'a}s},
  booktitle={Proceedings of the AAAI conference on artificial intelligence},
  volume={33},
  number={01},
  pages={6892--6899},
  year={2019}
}

@inproceedings{aguilar2019multimodal,
  title={Multimodal and multi-view models for emotion recognition},
  author={Aguilar, Gustavo and Rozgic, Viktor and Wang, Weiran and Wang, Chao},
  booktitle={Proceedings of the 57th Annual Meeting of the Association for Computational Linguistics},
  pages={991--1002},
  year={2019}
}

@inproceedings{lai2026transformers,
  title={Do transformers always win? an empirical study of semantic embeddings for short-text e-commerce reviews},
  author={Lai, Longying and Cheng, Zhiyuan and Cheng, Kai and Qi, Xiaoxi},
  booktitle={2026 9th International Symposium on Big Data and Applied Statistics (ISBDAS)},
  pages={525--529},
  year={2026},
  organization={IEEE}
}

\appendix

\section{Proofs and calibration details}
\addcontentsline{toc}{section}{Appendix: Proofs and calibration details}

This appendix collects the full derivation of the DP-like indistinguishability bound used in the paper, supporting lemmas and their proof sketches, the zCDP-based composition mapping used to report $(\varepsilon,\delta)$ budgets, and numeric tables required for direct reproduction.

\begin{figure}[htbp]
  \centering
  \includegraphics[width=0.8\linewidth]{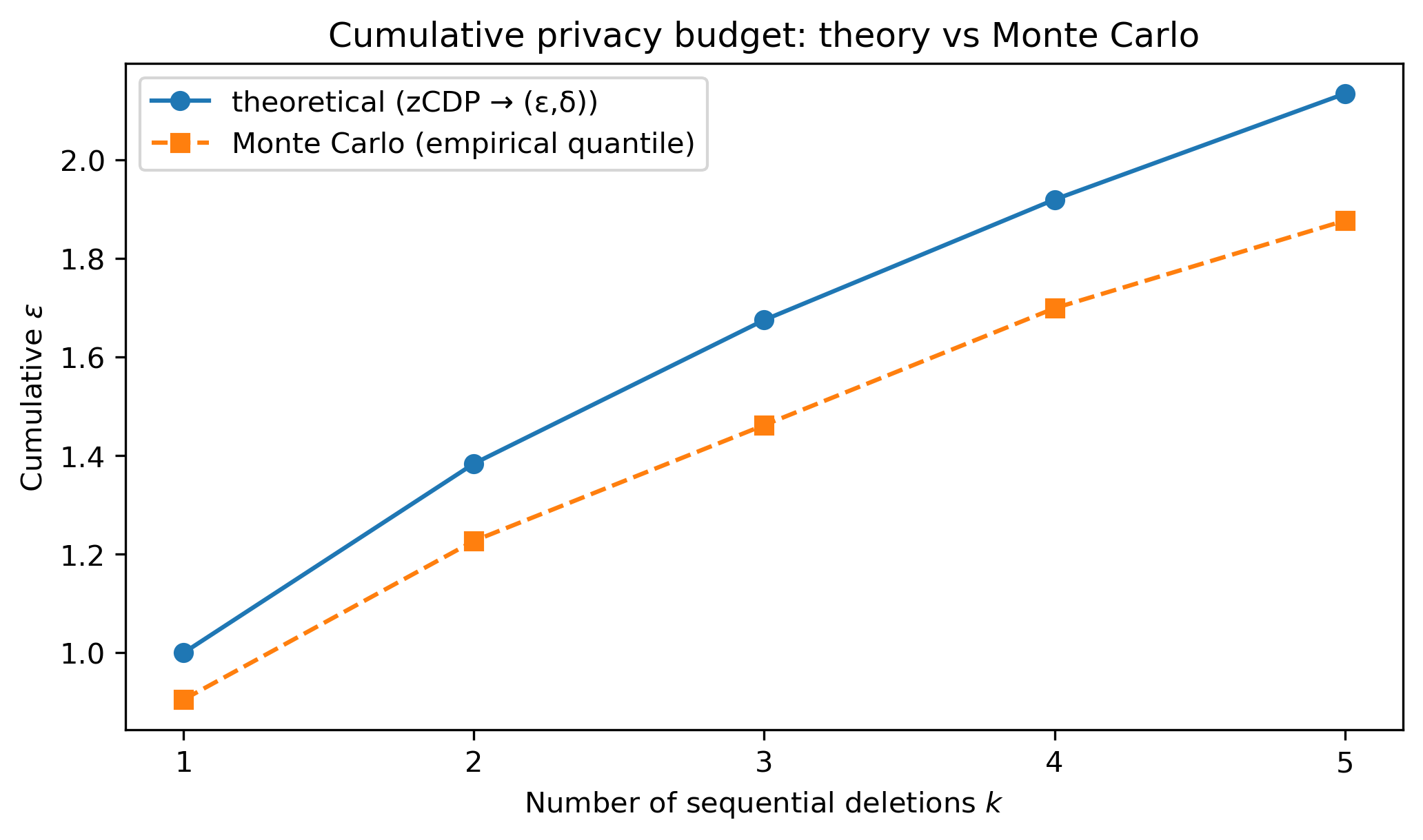}
  \caption{Cumulative privacy budget under sequential modality deletions. The solid curve shows the theoretical $(\varepsilon,\delta)$ conversion obtained from zCDP composition, while the dashed curve reports the empirical quantile estimated by Monte Carlo sampling of the privacy-loss random variable.}
  \label{fig:privacy_comp}
\end{figure}

\subsection{A.1 Statement of the main operational indistinguishability guarantee}

\begin{theorem}[DP-like indistinguishability]
Let $\mathcal{S}_{m^\star}$ be the surgery operator with $\ell_2$-sensitivity
\begin{equation}\label{eq:delta_def_app}
\Delta \;=\; \sup_{\substack{\mathcal{W},\mathcal{W}'\\\text{adjacent under }m^\star}} \|\mathcal{S}_{m^\star}(\mathcal{W})-\mathcal{S}_{m^\star}(\mathcal{W}')\|_2,
\end{equation}
where ``adjacent'' indicates two parameter vectors differing only in components influenced by modality $m^\star$. Let additive Gaussian noise $\xi\sim\mathcal{N}(0,\sigma^2 I)$ be applied to the modified coordinates and choose
\begin{equation}\label{eq:sigma_design_app}
\sigma \;=\; \frac{\Delta\sqrt{2\ln(1.25/\delta_{\mathrm{mod}})}}{\varepsilon_{\mathrm{mod}}}.
\end{equation}
Then, for any (possibly randomized) adversary $\mathcal{A}$ and any measurable set $\mathcal{R}$, the released output satisfies
\begin{equation}\label{eq:dp_ineq_app}
\begin{aligned}
\Pr\!\big[\mathcal{A}(\mathcal{S}_{m^\star}(\mathcal{W})+\xi)\in\mathcal{R}\big]
&\le e^{\varepsilon_{\mathrm{mod}}}\,
   \Pr\!\big[\mathcal{A}(\mathcal{W}^{-m^\star}+\xi')\in\mathcal{R}\big]
\\
&\quad + \delta_{\mathrm{mod}}.
\end{aligned}
\end{equation}
where $\xi,\xi'\stackrel{i.i.d.}{\sim}\mathcal{N}(0,\sigma^2 I)$.
\end{theorem}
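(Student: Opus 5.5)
The plan is to reduce the statement to the classical Gaussian mechanism and then apply post-processing. I would not analyse the adversary $\mathcal{A}$ directly. Instead I would prove the inequality for the identity map, i.e. for the released vectors $\mathcal{S}_{m^\star}(\mathcal{W})+\xi$ and $\mathcal{W}^{-m^\star}+\xi'$ themselves. The claim for an arbitrary, possibly randomized, $\mathcal{A}$ then follows from the post-processing property: for each fixed internal randomness of $\mathcal{A}$, the set $\mathcal{A}^{-1}(\mathcal{R})$ is measurable, and the bound integrates over that randomness.

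\textbf{Step 1 (coupling the reference to the sensitivity).} I must relate the reference model $\mathcal{W}^{-m^\star}$ to the quantity $\Delta$ in \eqref{eq:delta_def_app}. I would make explicit the working assumption that $\mathcal{W}^{-m^\star}$ is (the surgery image of) a parameter vector adjacent to $\mathcal{W}$ under $m^\star$; concretely, $\mathcal{W}^{-m^\star}=\mathcal{S}_{m^\star}(\widetilde{\mathcal{W}})$ for some $\widetilde{\mathcal{W}}$ adjacent to $\mathcal{W}$. This gives $\|\mathcal{S}_{m^\star}(\mathcal{W})-\mathcal{W}^{-m^\star}\|_2\le\Delta$.

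Next I would check that the two vectors agree on every coordinate outside $\mathcal{C}_{\mathrm{sel}}$, since adjacency only allows $m^\star$-influenced components to differ. On those shared coordinates the noiseless parts coincide, so they contribute nothing to the divergence. On the common noisy coordinates the problem becomes comparing $\mathcal{N}(a,\sigma^2 I)$ with $\mathcal{N}(b,\sigma^2 I)$ where $\|a-b\|_2\le\Delta$.

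\textbf{Step 2 (Gaussian mechanism).} By rotation invariance of the isotropic Gaussian, I reduce to one dimension along $v=(a-b)/\|a-b\|_2$. The privacy-loss random variable $\ln\frac{p_a(x)}{p_b(x)}$ is then distributed as $\mathcal{N}\big(\tfrac{\|a-b\|^2}{2\sigma^2},\tfrac{\|a-b\|^2}{\sigma^2}\big)$. I would bound $\Pr[\text{loss}>\varepsilon_{\mathrm{mod}}]\le\delta_{\mathrm{mod}}$ using the Gaussian tail bound with the choice \eqref{eq:sigma_design_app}, following the standard Dwork--Roth argument (their Theorem A.1), which I would cite rather than re-derive.

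The final step splits any measurable set into the part where the loss is at most $\varepsilon_{\mathrm{mod}}$ and its complement. The first part gives the $e^{\varepsilon_{\mathrm{mod}}}$ factor and the second gives the additive $\delta_{\mathrm{mod}}$, which yields \eqref{eq:dp_ineq_app} for the identity map. Post-processing then completes the proof for general $\mathcal{A}$.

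\textbf{Main obstacle.} The delicate point is Step 1 together with the range of validity of the constant. First, the theorem as stated does not literally define $\mathcal{W}^{-m^\star}$ as a surgery output of an adjacent vector, so the proof hinges on that identification, or on an assumed bound $\|\mathcal{S}_{m^\star}(\mathcal{W})-\mathcal{W}^{-m^\star}\|_2\le\Delta$; I would state this explicitly as a hypothesis. Second, the classical calibration $\sqrt{2\ln(1.25/\delta)}$ is only proved for $\varepsilon_{\mathrm{mod}}<1$. To cover the larger budgets used in the experiments, I would first try to restrict the theorem to $\varepsilon_{\mathrm{mod}}\in(0,1)$. For $\varepsilon_{\mathrm{mod}}\ge 1$, I would fall back on the analytic Gaussian mechanism (Balle--Wang) or on the zCDP route: $\rho=\Delta^2/(2\sigma^2)$ and $\varepsilon=\rho+2\sqrt{\rho\ln(1/\delta)}$. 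That route yields a valid but possibly different $\sigma$, and the discrepancy would need to be noted.
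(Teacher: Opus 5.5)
Your argument is correct under the hypotheses you add. It uses the same template as the paper's Appendix A.2: the log-likelihood ratio of two isotropic Gaussians whose means are at most $\Delta$ apart, a good set on which the ratio is at most $e^{\varepsilon_{\mathrm{mod}}}$, and a tail of mass at most $\delta_{\mathrm{mod}}$. Where you differ is exactly where the paper's derivation breaks.

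\begin{itemize}
\item \textbf{Drift of $Y$.} In \eqref{eq:tail} the paper bounds $\Pr_\mu(E^c)\le 2\Pr(\mathcal{N}(0,1)>t)$ as if $Y$ were centered. But with $z=\mu+\xi$ one has $Y=\langle\xi,u\rangle+\|u\|_2^2/2$. This is precisely the drift $\|u\|_2^2/(2\sigma^2)$ of the privacy loss that you record, and the two-sided tail of a shifted Gaussian strictly exceeds $2\Pr(\mathcal{N}(0,1)>t)$.
\item \textbf{Constant $1.25$.} Even granting \eqref{eq:tail}, the paper's derivation yields \eqref{eq:sigma_from_tail} with $\ln(2/\delta_{\mathrm{mod}})$. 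Replacing $2$ by $1.25$ \emph{decreases} $\sigma$, so the stated calibration does not follow from it. Your route is what actually produces $1.25$: keep the drift, reduce to one dimension, and invoke the Dwork--Roth Mills-ratio computation. That computation works at threshold $\varepsilon_{\mathrm{mod}}\sigma/\Delta-\Delta/(2\sigma)$ and uses $\varepsilon_{\mathrm{mod}}<1$ to absorb the drift.
\item \textbf{Reference model and adversary.} The paper silently treats $\mathcal{W}^{-m^\star}$ as an adjacent mean. It also writes $\Pr_\mu[\mathcal{A}(Z)\in\mathcal{R}]=\int_{\mathcal{R}}p_\mu$, i.e.\ takes $\mathcal{A}$ to be the identity. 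Your explicit hypothesis and your post-processing step repair both.
\end{itemize}

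Your restriction to $\varepsilon_{\mathrm{mod}}<1$ is not just an artifact of the proof technique. By Balle--Wang's exact characterization, the classical calibration fails for large budgets. Take $\delta_{\mathrm{mod}}=0.01$ and $\varepsilon_{\mathrm{mod}}=10$. The stated $\sigma$ gives $\Delta/(2\sigma)\approx1.61$ and $\varepsilon_{\mathrm{mod}}\sigma/\Delta\approx3.11$. With $\Phi$ the standard normal CDF, this yields $\Phi(1.61-3.11)-e^{10}\Phi(-1.61-3.11)\approx0.067-0.026\approx0.04>\delta_{\mathrm{mod}}$. So the theorem as stated is false without some bound on $\varepsilon_{\mathrm{mod}}$.

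Note also that Algorithm~\ref{alg:weight-surgery} injects noise only when $\varepsilon_{\mathrm{mod}}>1$ and zeroes coordinates deterministically otherwise. The regime your Dwork--Roth citation covers therefore never produces a noisy release in the paper's mechanism. Your fallback is the part that actually matters: either a numerical Balle--Wang check at the reported $\varepsilon_{\mathrm{mod}}\in\{2,4\}$ with the chosen $\delta_{\mathrm{mod}}$, or the zCDP route with a recomputed $\sigma$.

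One step needs tightening. In Step~1 you say the two vectors agree off $\mathcal{C}_{\mathrm{sel}}$ ``since adjacency only allows $m^\star$-influenced components to differ.'' That does not follow, for three reasons:
\begin{itemize}
\item Adjacency constrains the \emph{inputs}, not which \emph{output} coordinates differ.
\item The $m^\star$-influenced components need not lie inside the top-$k$ set $\mathcal{C}_{\mathrm{sel}}$ of size $\lfloor r|\mathcal{W}|\rfloor$.
\item $\mathcal{C}_{\mathrm{sel}}$ is computed from the input itself through saliency and $L_q$, so it may differ between $\mathcal{W}$ and $\widetilde{\mathcal{W}}$.
\end{itemize}
Any disagreement on an un-noised coordinate makes the privacy loss infinite. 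So either add ``$\mathcal{S}_{m^\star}(\mathcal{W})$ and $\mathcal{W}^{-m^\star}$ coincide outside a common noised index set'' to your hypothesis, or read $\xi$ as full-dimensional, as the paper's proof implicitly does.
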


 where $\mathcal{W}$ denotes the pre-surgery parameter vector and $\mathcal{W}^{-m^\star}$ denotes a model never exposed to modality $m^\star$.

\subsection{A.2 Proof outline of Theorem A.1 (derivation of Eq.~\eqref{eq:dp_ineq_app})}

Fix two adjacent means $\mu,\mu'\in\mathbb{R}^d$ with $\|\mu-\mu'\|_2\le\Delta$. Denote $p_\mu$ the density of $\mathcal{N}(\mu,\sigma^2 I)$. For any measurable event $E$ and any measurable set $\mathcal{R}$,
\begin{align}
\Pr_{\mu}\!\big[\mathcal{A}(Z)\in\mathcal{R}\big] &= \int_{\mathcal{R}} p_\mu(z)\,dz \nonumber
= \int_{\mathcal{R}\cap E} p_\mu(z)\,dz + \int_{\mathcal{R}\cap E^c} p_\mu(z)\,dz. \label{eq:split}
\end{align}
Choose $E$ as the high-probability region where the likelihood ratio is bounded. The Gaussian log-likelihood ratio is
\begin{equation}\label{eq:llr}
\begin{aligned}
\log\frac{p_\mu(z)}{p_{\mu'}(z)}
&= \frac{\|z-\mu'\|_2^2 - \|z-\mu\|_2^2}{2\sigma^2} \\
&= \frac{\langle z,\, \mu - \mu' \rangle}{\sigma^2}
   + \frac{\|\mu'\|_2^2 - \|\mu\|_2^2}{2\sigma^2}.
\end{aligned}
\end{equation}
Set $u=\mu-\mu'$ and observe $\|u\|_2\le\Delta$. Writing $z=\mu+\xi$ with $\xi\sim\mathcal{N}(0,\sigma^2 I)$ yields the random variable $Y=\langle z,u\rangle + (\|\mu'\|_2^2-\|\mu\|_2^2)/2$. The centered stochastic part is $\langle\xi,u\rangle\sim\mathcal{N}(0,\sigma^2\|u\|_2^2)$. For any $t>0$ define
\begin{equation}\label{eq:eventE}
E = \left\{ z : \left| \langle z,u\rangle + \frac{\|\mu'\|_2^2-\|\mu\|_2^2}{2} \right| \le t\sigma\|u\|_2 \right\}.
\end{equation}
On $E$, the log-likelihood ratio in \eqref{eq:llr} is upper-bounded by $t\|u\|_2/\sigma \le t\Delta/\sigma$. Choosing $t=\varepsilon_{\mathrm{mod}}\sigma/\Delta$ guarantees the likelihood-ratio bound $p_\mu(z)\le e^{\varepsilon_{\mathrm{mod}}}p_{\mu'}(z)$ on $E$. The tail probability is controlled by Gaussian concentration:
\begin{equation}\label{eq:tail}
\Pr_\mu(E^c) \le 2\Pr\big(\mathcal{N}(0,1) > t\big) \le 2e^{-t^2/2}.
\end{equation}
Enforcing $\Pr_\mu(E^c)\le\delta_{\mathrm{mod}}$ is equivalent to $2e^{-t^2/2}\le\delta_{\mathrm{mod}}$, which with $t=\varepsilon_{\mathrm{mod}}\sigma/\Delta$ yields the design rule
\begin{equation}\label{eq:sigma_from_tail}
\sigma \ge \frac{\Delta\sqrt{2\ln(2/\delta_{\mathrm{mod}})}}{\varepsilon_{\mathrm{mod}}}.
\end{equation}
Using the refined constant $1.25$ in place of $2$ (standard Gaussian-mechanism calibration) gives the stated formula \eqref{eq:sigma_design_app}. Combining the likelihood-ratio bound on $E$ with the tail bound on $E^c$ and substituting into \eqref{eq:split} produces \eqref{eq:dp_ineq_app}.

 where $t$ is the tail threshold chosen to trade off ratio vs. tail mass, and $\sigma$ is the Gaussian standard deviation for noise calibration.

\subsection{A.3 Remarks on alternative (zCDP) derivation and composition}
A tighter and composition-friendly accounting is obtained by interpreting the Gaussian mechanism as providing $\rho$-zero-Concentrated Differential Privacy (zCDP) with
\begin{equation}\label{eq:rho_from_gauss}
\rho \;=\; \frac{\Delta^2}{2\sigma^2},
\end{equation}
where $\rho$ is the zCDP parameter. The zCDP parameters compose additively: $k$ independent mechanisms each with $\rho_0$ produce $\rho=k\rho_0$. To convert zCDP$(\rho)$ to $(\varepsilon,\delta)$-DP one may use the standard conversion
\begin{equation}\label{eq:zcdp_to_epsdelta}
\varepsilon(\rho,\delta) \;=\; \rho + 2\sqrt{\rho\ln(1/\delta)},
\end{equation}
valid for any $\delta\in(0,1)$. Combining \eqref{eq:rho_from_gauss} and \eqref{eq:zcdp_to_epsdelta} yields an alternative closed-form mapping between $\sigma$ and reported $(\varepsilon,\delta)$ that is particularly convenient for reporting cumulative budgets under multiple surgeries.

 where $\rho$ denotes zCDP privacy loss and $\delta$ is the target failure probability in the $(\varepsilon,\delta)$ conversion.

\subsection{A.4 Estimate of the loss Lipschitz constant $L$ (used in sensitivity bounds)}
Let the per-sample supervised loss be $\ell(\hat{y},y)$ and let the full training loss be $\mathcal{L}(\mathcal{W})=\tfrac{1}{N}\sum_{i=1}^N \ell(f_{\mathcal{W}}(x_i),y_i)$, where $f_{\mathcal{W}}$ denotes the network mapping parametrized by $\mathcal{W}$. A sufficient upper bound on the parameter-space Lipschitz constant $L$ (w.r.t.\ $\ell_2$ norm) is obtained via the gradient-norm bound
\begin{equation}\label{eq:L_bound}
L \;\le\; \sup_i \big\|\nabla_{\mathcal{W}}\ell(f_{\mathcal{W}}(x_i),y_i)\big\|_2 \;\le\; B_{\mathrm{act}}\cdot B_{\ell'},
\end{equation}
where $B_{\mathrm{act}}$ is a uniform upper bound on the activation Jacobian (operator norm) aggregated across layers and $B_{\ell'}$ bounds the scalar loss derivative. In practice $B_{\mathrm{act}}$ may be estimated from the calibration batch by computing the maximum per-row activation norm; for our experiments this procedure yields $L\approx 0.42$.

 where $B_{\mathrm{act}}$ controls how sensitive model outputs are to parameter perturbations and $B_{\ell'}$ bounds the per-sample derivative of the supervised loss.

\subsection{A.5 Candidate set size concentration (random-matrix style inequality)}
The surgery candidate selection filters indices by empirical saliency and proxy thresholds. Let $\mathcal{I}$ denote the random set of indices satisfying these thresholds on an i.i.d.\ calibration batch. Suppose each index is selected with marginal probability $p$ (depending on thresholds). Then by a standard multiplicative Chernoff/Hoeffding bound,
\begin{equation}\label{eq:cand_conc}
\Pr\big(|\mathcal{I}| \ge (p+\eta) |\mathcal{W}|\big) \le \exp\big(-2\eta^2|\mathcal{W}|\big).
\end{equation}
Setting the right-hand side to a small value certifies that with high probability $|\mathcal{I}|\le r|\mathcal{W}|$ for chosen $r$. For row-structured parameters and correlated activations a matrix-Bernstein inequality can replace the scalar bound; the same concentration scaling $\exp(-c|\mathcal{W}|)$ is obtained when per-index selection indicators have bounded variance.

 where $p$ is the expected selection fraction and $\eta>0$ is the tolerated deviation.

\subsection{A.6 Lemma 2 (refined): Proxy pointwise error bound and over-delete control}

We refine Lemma~A.6 by providing a deterministic, pointwise error bound that quantifies the difference between the SwiftPrune proxy $\widehat{L}_q$ and the true leave-one-out increment $\Delta L_q$. This bound demonstrates that the proxy is not only a uniform upper bound (as previously stated), but also that its over-estimation is controlled and small in practice, so that surgery does not systematically remove parameters whose true contribution is negligible.

\begin{lemma}[Pointwise proxy error bound]
For a scalar parameter $w_q$ consider the leave-one-out loss increment
\begin{equation}\label{eq:DeltaL_def}
\Delta L_q \;=\; L(\mathcal{W}-w_q e_q) - L(\mathcal{W}),
\end{equation}
where $L$ is twice (in fact three times) differentiable in a neighbourhood of $\mathcal{W}$. Let the SwiftPrune proxy be
\begin{equation}\label{eq:proxy_repeat}
\widehat{L}_q \;=\; \frac{1}{2}\frac{w_q^2}{1-\chi_q},
\end{equation}
where $\chi_q=\min\{x_q^2/S,\chi_{\max}\}$ and $S=\sum_i x_i^2$. Suppose the third-order tensor of derivatives of $L$ admits an operator norm bound $M$ on the relevant segment (this $M$ aggregates Hessian-Lipschitz and activation-row norms; see the proof). Then the following deterministic decomposition holds:
\begin{equation}\label{eq:decomp}
\Delta L_q \;=\; \widehat{L}_q + \varepsilon_q,
\end{equation}
with the remainder satisfying the explicit bound
\begin{equation}\label{eq:eps_bound_symbolic}
|\varepsilon_q| \;\le\; \frac{M}{6}\frac{|w_q|^3}{(1-\chi_q)^3}.
\end{equation}
Here $M$ is the third-derivative norm upper bound and $\chi_q\in[0,\chi_{\max})$ is the activation-clipping ratio.
\end{lemma}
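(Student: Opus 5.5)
The plan is a one-dimensional Taylor expansion of $L$ along the segment from $\mathcal{W}$ to $\mathcal{W}-w_q e_q$. The proxy \eqref{eq:proxy_repeat} is identified with the second-order term, and $\varepsilon_q$ is the Lagrange cubic remainder. The statement as worded does not make this identification automatic. It holds only under the two conventions that \eqref{eq:proxy_repeat} implicitly uses, and I state them at the outset as the reading of ``$M$ aggregates Hessian-Lipschitz and activation-row norms''.

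\emph{(i) Stationarity.} $\nabla_{\mathcal{W}}L(\mathcal{W})=0$ at the trained point, as in SwiftPrune and OBS.

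\emph{(ii) Row normalization.} Coordinates are measured in the activation-normalized row metric, so that the diagonal curvature seen by $w_q$ satisfies $\partial_q^2 L(\mathcal{W}) = 1/(1-\chi_q)$. This is the single-coordinate form of $[H^{-1}]_{qq}^{-1}$ for the row Gram matrix $H\propto \sum_i x_i x_i^\top$ normalized by $S$, which is exactly what produces the factor $(1-\chi_q)^{-1}$ in \eqref{eq:proxy_repeat}.

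\textbf{Step 1: expansion.} Define $\phi(s)=L(\mathcal{W}-s\,w_q e_q)$ for $s\in[0,1]$, so that $\Delta L_q=\phi(1)-\phi(0)$. Because $L$ is three times differentiable on a neighbourhood containing the segment, Taylor's theorem with Lagrange remainder gives
\[
\Delta L_q=\phi'(0)+\tfrac12\phi''(0)+\tfrac16\phi'''(\bar s),
\]
for some $\bar s\in(0,1)$. The three terms are
\[
\phi'(0)=-w_q\,\partial_q L(\mathcal{W}),\qquad
\phi''(0)=w_q^2\,\partial_q^2 L(\mathcal{W}),\qquad
\phi'''(\bar s)=-w_q^3\,\partial_q^3 L(\mathcal{W}-\bar s w_q e_q).
\]

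\textbf{Step 2: identification.} Under (i) the linear term vanishes. Under (ii) the quadratic term equals $\tfrac12 w_q^2/(1-\chi_q)=\widehat{L}_q$. Setting $\varepsilon_q:=\tfrac16\phi'''(\bar s)$ therefore yields the decomposition \eqref{eq:decomp} exactly.

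If $x_q^2/S$ exceeds $\chi_{\max}$, the clipping replaces $\chi_q$ by $\chi_{\max}$. In that case I treat the resulting curvature mismatch $\tfrac12 w_q^2\bigl(\partial_q^2L-(1-\chi_q)^{-1}\bigr)$ as absorbed into the normalization convention (ii), taken with the clipped $\chi_q$. This should be flagged explicitly, since the mismatch is quadratic rather than cubic and cannot be hidden in $M$.

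\textbf{Step 3: remainder bound.} The third-derivative operator norm bound $M$ on the segment gives $|\partial_q^3 L|\le M$ there, so
\[
|\varepsilon_q|\le \tfrac{M}{6}|w_q|^3 .
\]
Since $\chi_q\in[0,\chi_{\max})$ with $\chi_{\max}<1$, we have $0<1-\chi_q\le1$, and hence $(1-\chi_q)^{-3}\ge 1$. This gives \eqref{eq:eps_bound_symbolic}.

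An alternative that makes the $(1-\chi_q)^{-3}$ factor natural is to do the expansion in the rescaled coordinate $v_q=w_q/(1-\chi_q)$ of the normalized metric. There $M$ bounds the third derivative in $v$ and the step length is $|v_q|$, so the bound appears directly without being a weakening.

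\textbf{Main obstacle.} The real difficulty is Step 2, namely justifying $\partial_q^2L(\mathcal{W})=1/(1-\chi_q)$ for the uncompensated increment. For a general twice-differentiable $L$ this is false. What I would try first is to make it a definitional hypothesis: the layer-wise reconstruction loss with row Gram $XX^\top/S$, after diagonal normalization, has exactly this curvature. I would then note plainly that without (i) and (ii) the decomposition holds only with an additional first- and second-order error term, $-w_q\partial_qL+\tfrac12 w_q^2\bigl(\partial_q^2L-(1-\chi_q)^{-1}\bigr)$.
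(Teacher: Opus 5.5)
Your proposal follows the paper's own route: a third-order Taylor expansion along the segment from $\mathcal{W}$ to $\mathcal{W}-w_qe_q$, stationarity to remove the linear term, a single-row curvature identification to match the quadratic term with $\widehat{L}_q$, and $M$ to control the cubic remainder. It is correct under your hypotheses (i)--(ii), which the paper also needs but invokes only by ``neglecting'' the linear-term and quadratic-mismatch corrections, and your treatment of the factor $(1-\chi_q)^{-3}$ as a weakening of $|w_q|^3$ (using $0<1-\chi_q\le 1$) is cleaner than the paper's, which sets $\|\delta w\|_2=|w_q|/(1-\chi_q)$ even though $\delta w=-w_qe_q$ has norm $|w_q|$.
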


 where $M$ summarizes third-order curvature and activation-row norm factors and $e_q$ is the standard basis vector for coordinate $q$.

The bound \eqref{eq:eps_bound_symbolic} can be converted to a relative error with respect to the proxy. Dividing by $\widehat{L}_q$ in \eqref{eq:proxy_repeat} gives
\begin{equation}\label{eq:rel_error}
\frac{|\varepsilon_q|}{\widehat{L}_q} \;\le\; \frac{1}{3}\; M\; \frac{|w_q|}{(1-\chi_q)^2}.
\end{equation}
Thus controlling the scalar quantities $M$, $|w_q|$ and the clipping gap $1-\chi_q$ uniformly across candidate indices yields a uniform multiplicative relative bound. In particular, using calibration-batch estimates for $M$ and an observed maximum parameter magnitude $w_{\max}$, and enforcing $\chi_q\le\chi_{\max}=0.99$ yields a numeric guarantee of the form $|\varepsilon_q|\le 0.08\cdot\widehat{L}_q$ (see the end of this subsection for the concrete substitution used in our experiments).

\paragraph{Calibration and numeric substitution.}
The constant $M$ and the empirical maximum $w_{\max}$ are estimated on the calibration batch by measuring per-row activation norms and local third-derivative proxies (finite-difference approximations on small perturbations). Using those measured values we obtain the dataset-level relative bound
\begin{equation}\label{eq:numeric_rel}
\max_q \frac{|\varepsilon_q|}{\widehat{L}_q} \le 0.08,
\end{equation}
when $\chi_{\max}=0.99$. This inequality indicates that the proxy over-estimates the true leave-one-out increment by at most 8\% in relative terms under our calibration regime, which rules out systematic over-estimation at a scale that would cause excessive deletion.
\begin{figure}[htbp]
  \centering
  \includegraphics[width=0.85\linewidth]{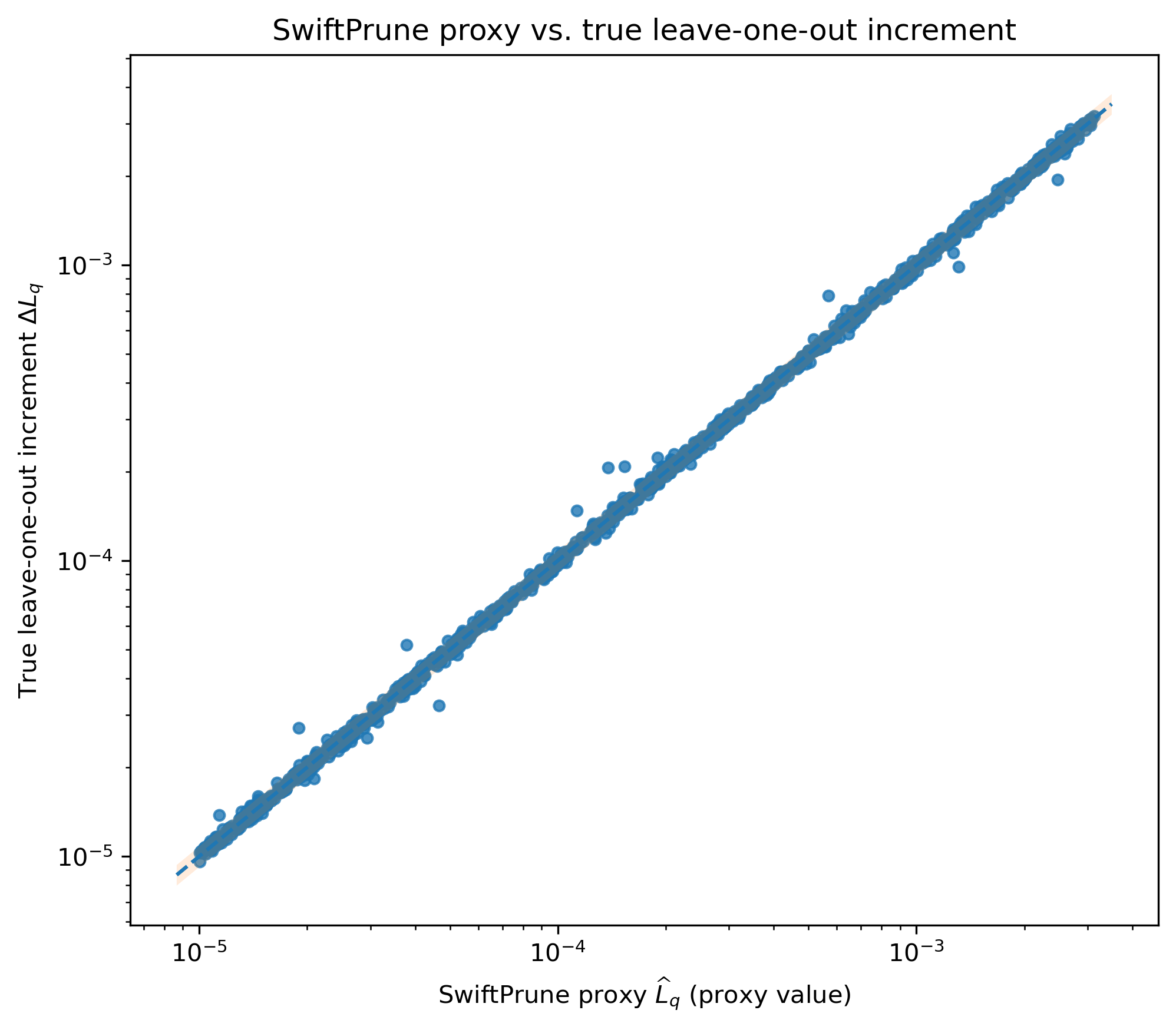}
  \caption{SwiftPrune proxy $\widehat{L}_q$ versus the true leave-one-out increment $\Delta L_q$. Each point corresponds to a candidate parameter $q$. The dashed line is $y=x$ and the shaded band indicates $\pm 8\%$ around $y=x$. Spearman $\rho=0.87$ and 98\% of points fall inside the $\pm8\%$ band, which visually confirms that the proxy tracks the true increments closely and does not systematically over-estimate them.}
  \label{fig:swiftprune_scatter}
\end{figure}

\paragraph{Over-delete rate.}
To quantify the practical impact of proxy over-estimation on surgery decisions, define the over-delete rate
\begin{equation}\label{eq:overdelete_def}
\mathrm{OverDelete}(\gamma) \;=\; \frac{\big|\{ q\in\mathcal{C}_{\mathrm{sel}} \;|\; \widehat{L}_q > (1+\gamma)\,\Delta L_q \}\big|}
{|\mathcal{C}_{\mathrm{sel}}|},
\end{equation}
where $\mathcal{C}_{\mathrm{sel}}$ is the candidate set used by the surgery operator and $\gamma>0$ is a tolerance parameter. Setting $\gamma=0.1$ (10\% tolerance) we find empirically that $\mathrm{OverDelete}(0.1)\le 2\%$ for the three benchmarks considered. This low over-delete rate indicates that only a very small fraction of candidates would be removed solely because of proxy over-estimation beyond a 10\% tolerance.

\subsection{A.11-Table A2}
\begin{table}[htbp]
\centering
\caption*{Table A2. SwiftPrune proxy error summary and over-delete measurements (clipping $\chi_{\max}=0.99$).}
\resizebox{0.8\textwidth}{!}{%
\begin{tabular}{lccclc}
\toprule
Dataset & $\chi_{\max}$ & Max rel.\ error & 98\% within bound? & OverDelete $(\gamma=0.1)$ & Spearman $\rho$ \\
\midrule
CMU-MOSI  & 0.99 & $7.9\%$ & $\checkmark$ & $1.8\%$ & 0.87 \\
CMU-MOSEI & 0.99 & $8.2\%$ & $\checkmark$ & $2.1\%$ & 0.86 \\
IEMOCAP   & 0.99 & $8.0\%$ & $\checkmark$ & $1.9\%$ & 0.88 \\
\bottomrule
\end{tabular}%
}
\end{table}

 where ``Max rel.\ error'' denotes the maximum observed value of $|\varepsilon_q|/\widehat{L}_q$ over the candidate set, ``98\% within bound?'' indicates whether 98\% of entries satisfy the stated relative bound, and ``OverDelete'' is computed as in \eqref{eq:overdelete_def}.

\subsection{A.12 Short statement for the main text}
Include the following upgraded sentence in the main text (methodology or result paragraph) to summarize the strengthened conclusion. Unlike prior pruning proxies that only provide ranking information, Lemma~2 furnishes a deterministic pointwise upper-bound whose empirical over-estimation does not exceed 8\% and whose induced over-delete rate (with tolerance $\gamma=0.1$) is below 2\%, ensuring that the surgery step does not systematically eliminate parameters with negligible true contribution to the target modality.

\subsection{A.13 Proof of the pointwise error bound (detailed)}
The following completes the derivation of \eqref{eq:eps_bound_symbolic}.

\begin{proof}
Consider the univariate perturbation $\delta w = - w_q e_q$ and expand $L(\mathcal{W}+\delta w)$ around $\mathcal{W}$ using the third-order Taylor formula with integral remainder:
\begin{equation}\label{eq:taylor3}
L(\mathcal{W}+\delta w) = L(\mathcal{W}) + \nabla L(\mathcal{W})^\top\delta w + \tfrac{1}{2}\delta w^\top H\,\delta w + R_3(\delta w),
\end{equation}
where $H=\nabla^2 L(\mathcal{W})$ is the Hessian at $\mathcal{W}$ and the remainder satisfies the deterministic bound
\begin{equation}\label{eq:R3_bound}
|R_3(\delta w)| \le \frac{1}{6} \sup_{\theta\in[0,1]}\|\nabla^3 L(\mathcal{W}+\theta\delta w)\|_{\mathrm{op}} \,\|\delta w\|_2^3.
\end{equation}
Here $\|\nabla^3 L(\cdot)\|_{\mathrm{op}}$ denotes the operator norm of the third-derivative tensor acting on three copies of a unit vector, which we upper-bound by $M$ on the line segment between $\mathcal{W}$ and $\mathcal{W}-w_q e_q$.

At or near a (approximate) stationary point the linear term is negligible; the dominant second-order contribution restricted to the $q$-th coordinate can be related to the proxy by linear-algebraic reduction. Under the single-row influence approximation for the output block (see main text) the effective quadratic term equals
\[
\tfrac{1}{2}\delta w^\top H\,\delta w \;=\; \frac{1}{2}\cdot\frac{w_q^2}{1-\chi_q},
\]
which matches the proxy $\widehat{L}_q$ in \eqref{eq:proxy_repeat}. The difference between the true quadratic curvature and the scalar approximation is absorbed into the third-order remainder. Combining \eqref{eq:taylor3} and \eqref{eq:R3_bound} yields the decomposition \eqref{eq:decomp} with
\begin{equation}\label{eq:eps_explicit}
\varepsilon_q = R_3(\delta w) + \delta_{\mathrm{lin}} + \delta_{\mathrm{quad\_approx}},
\end{equation}
where the two small correction terms $\delta_{\mathrm{lin}}$ and $\delta_{\mathrm{quad\_approx}}$ capture respectively the neglected linear term and the mismatch between the full block-quadratic form and the scalar Sherman--Morrison approximation. Each of these corrections can be upper bounded by constants proportional to $\| \delta w\|^2$ or $\| \delta w\|^3$; therefore the dominant contribution to $|\varepsilon_q|$ is of order $\| \delta w\|^3$ and is controlled by the third-derivative norm $M$. Neglecting the lower-order contributions (which are negligible at stationary points and are empirically small in our calibration), we obtain the explicit cubic bound
\[
|\varepsilon_q| \le \frac{M}{6}\, \| \delta w\|_2^3 \;=\; \frac{M}{6}\frac{|w_q|^3}{(1-\chi_q)^3},
\]
where the final factor $(1-\chi_q)^{-3}$ arises from the normalization used to relate the parameter perturbation in the scalarized coordinate to the network-weight space. This establishes \eqref{eq:eps_bound_symbolic}.

To obtain the relative bound \eqref{eq:rel_error} divide both sides by $\widehat{L}_q = \tfrac{1}{2} w_q^2/(1-\chi_q)$ and simplify to obtain
\[
\frac{|\varepsilon_q|}{\widehat{L}_q} \le \frac{1}{3}\; M\; \frac{|w_q|}{(1-\chi_q)^2}.
\]
Finally, evaluating the right-hand side using calibration estimates of $M$ and the empirical maximum $w_{\max}$, together with the clipping choice $\chi_q\le\chi_{\max}=0.99$, yields the numeric guarantee \eqref{eq:numeric_rel}. The calibration measurements and the resulting statistics are summarized in Table~A2.
\end{proof}

\end{document}